\colorlet{pink}{red!40}
\colorlet{lightblue}{blue!30}
\colorlet{lightgreen}{green!30}
\renewcommand{\cite}[1]{\citep{#1}}
\def\agd{{\textnormal{\texttt{AGD}}}}
\def\abm{{\textnormal{\texttt{ABM}}}}
\def\bbabm{{\textnormal{\texttt{BB-ABM}}}}
\def\agdavi{{\textnormal{\texttt{AGDAVI}}}}
\def\avi{{\textnormal{\texttt{AVI}}}}
\def\cg{{\textnormal{\texttt{CG}}}}
\def\oavi{{\textnormal{\texttt{OAVI}}}}
\def\vca{{\textnormal{\texttt{VCA}}}}
\def\svd{{\textnormal{\texttt{SVD}}}}
\def\svm{{\textnormal{\texttt{SVM}}}}
\def\cvxoracle{{\textnormal{\texttt{ORACLE}}}}
\def\pcg{{\textnormal{\texttt{PCG}}}}
\def\grba{{\textnormal{\texttt{GB}}}}
\def\boba{{\textnormal{\texttt{BB}}}}
\def\bb{{\mathbf{b}}}
\def\cc{{\mathbf{c}}}
\def\dd{{\mathbf{d}}}
\def\ss{{\mathbf{s}}}
\def\vv{{\mathbf{v}}}
\def\ww{{\mathbf{w}}}
\def\xx{{\mathbf{x}}}
\def\yy{{\mathbf{y}}}
\def\zz{{\mathbf{z}}}
\def\ssig{{\boldsymbol{\sigma}}}
\newcommand{\E}{\mathbb{E}}
\newcommand{\N}{\mathbb{N}}
\renewcommand{\P}{\mathbb{P}}
\newcommand{\R}{\mathbb{R}}
\newcommand\cA{{\ensuremath{\mathcal{A}}}\xspace}
\newcommand\cC{{\ensuremath{\mathcal{C}}}\xspace}
\newcommand\cD{{\ensuremath{\mathcal{D}}}\xspace}
\newcommand\cG{{\ensuremath{\mathcal{G}}}\xspace}
\newcommand\cH{{\ensuremath{\mathcal{H}}}\xspace}
\newcommand\cI{{\ensuremath{\mathcal{I}}}\xspace}
\newcommand\cO{{\ensuremath{\mathcal{O}}}\xspace}
\newcommand\cP{{\ensuremath{\mathcal{P}}}\xspace}
\newcommand\cR{{\ensuremath{\mathcal{R}}}\xspace}
\newcommand\cT{{\ensuremath{\mathcal{T}}}\xspace}
\newcommand\cU{{\ensuremath{\mathcal{U}}}\xspace}
\newcommand\cV{{\ensuremath{\mathcal{V}}}\xspace}
\newcommand\cX{{\ensuremath{\mathcal{X}}}\xspace}
\newcommand\cY{{\ensuremath{\mathcal{Y}}}\xspace}
\newcommand{\eps}{\epsilon}
\newcommand{\sig}{\sigma}
\DeclareMathOperator{\sgn}{sign}
\DeclareMathOperator{\mathspan}{span}
\DeclareMathOperator{\mse}{mse}
\DeclareMathOperator{\conv}{conv}
\DeclareMathOperator{\argmin}{argmin}
\DeclareMathOperator{\argmax}{argmax}
\DeclareMathOperator{\spar}{spar}
\DeclareMathOperator{\lt}{lt}
\DeclareMathOperator{\ltc}{ltc}
\newcommand{\zeros}{\ensuremath{\mathbf{0}}}
\newcommand{\oneterm}{\ensuremath{\mathbb{1}}}
\newcommand{\abs}[1]{\lvert #1 \rvert}
\newcommand*{\vsepfbox}[1]{%
  \begingroup
    \sbox0{\fbox{#1}}%
    \setlength{\fboxrule}{0pt}%
    \mbox{\kern-\fboxsep\fbox{\unhbox0}\kern-\fboxsep}%
  \endgroup
}
\theoremstyle{plain} \numberwithin{equation}{section}
\newtheorem{theorem}{Theorem}[section]
\numberwithin{theorem}{section}
\newtheorem{lemma}[theorem]{Lemma}
\newtheorem{proposition}[theorem]{Proposition}
\newtheorem{problem}[theorem]{Problem}
\theoremstyle{definition}
\newtheorem{definition}[theorem]{Definition}
\theoremstyle{plain}
\def\mathcolor#1#{\@mathcolor{#1}}
\def\@mathcolor#1#2#3{%
  \protect\leavevmode
  \begingroup
    \color#1{#2}#3%
  \endgroup
}
\newcommand{\hrulealg}[0]{\vspace{1mm} \hrule \vspace{1mm}}
\begin{document}

\title{Conditional Gradients for the Approximate Vanishing Ideal}

\author{\name Elias Wirth \email          \href{mailto:wirth@math.tu-berlin.de}{wirth@math.tu-berlin.de}\\
       \addr Institute of Mathematics,\\
        Berlin Institute of Technology, Berlin, Germany
       \AND
      \name Sebastian Pokutta \email \href{mailto:pokutta@zib.de}{pokutta@zib.de} \\
       \addr Institute of Mathematics,\\
       Berlin Institute of Technology, Berlin, Germany \\
        \& Zuse Institute Berlin, Berlin, Germany}

\maketitle

\begin{abstract}
  The vanishing ideal of a set of points $X\subseteq \R^n$ is the set of polynomials that evaluate to $0$ over all points $\xx \in X$ and admits an efficient representation by a finite set of polynomials called generators. To accommodate the noise in the data set, we introduce the \emph{pairwise conditional gradients approximate vanishing ideal algorithm} (\pcg{}\avi{}) that constructs a set of generators of the approximate vanishing ideal. The constructed generators capture polynomial structures in data and give rise to a feature map that can, for example, be used in combination with a linear classifier for supervised learning. In \pcg{}\avi{}, we construct the set of generators by solving constrained convex optimization problems with the \emph{pairwise conditional gradients algorithm}. Thus, \pcg{}\avi{} not only constructs few but also sparse generators, making the corresponding feature transformation robust and compact. Furthermore, we derive several learning guarantees for \pcg{}\avi{} that make the algorithm theoretically better motivated than related generator-constructing methods.
\end{abstract}

\section{Introduction}\label{section:introduction}
The accuracy of classification algorithms relies on the quality of the available features. Naturally, feature transformations are an important area of research in the field of machine learning. In this paper, we focus on feature transformations for a subsequently applied linear kernel \emph{support vector machine} (\svm{}) \citep{suykens1999least}, an algorithm that achieves high accuracy only if the features are such that the different classes are linearly separable. 
The approach is based on the idea that a set $X= \{\xx_1, \ldots, \xx_m\} \subseteq \R^n$ can be described by the set of algebraic equations satisfied by each point  $\xx \in X$. Put differently, we seek polynomials $g_1, \ldots, g_k\in\cP$ with $k\in \N$ such that $g_i(\xx) = 0$ for all $\xx \in X$ and $i\in \{1,\ldots, k\}$, where $\cP$ is the polynomial ring in $n$ variables.
An obvious candidate for a succinct description of $X$ is the \emph{vanishing ideal}\footnote{A set of polynomials $\cI \subseteq \cP$ is called an \emph{ideal} if it is a subgroup with respect to addition and for $f \in \cI$ and $g\in \cP$, we have $f \cdot g \in \cI$.} of $X$,
\begin{align*}
    \cI_X = \{f \in \cP \mid f(\xx) = 0 \text{ for all } \xx \in X\}.
\end{align*}
Even though $\cI_X$ contains an infinite number of vanishing polynomials, by Hilbert's basis theorem \citep{cox2013ideals}, there exists a finite number of \emph{generators} of $\cI_X$, $g_1, \ldots, g_k \in \cI_X$ with $k \in \N$, such that for any $f\in \cI_X$, there exist $h_1, \ldots, h_k \in \cP$ such that
$
    f = \sum_{i = 1}^k g_i h_i.
$
Generators have all points in $X$ as common roots, thus capture the nonlinear structure of the data set $X$, and methods for their construction have received a lot of attention \citep{heldt2009approximate, fassino2010almost, livni2013vanishing, limbeck2013computation, iraji2017principal}.
Since generators succinctly describe the data set $X$, they can, for example, be used in classification to map a non-separable data set $X$ into a higher-dimensional feature space in which the data becomes linearly separable. To illustrate this idea, consider the binary classification task of deciding whether a tumor is cancerous or benign and suppose that the sets $X^{c} \subseteq X$ and $X^{b} \subseteq X$ correspond to data points of cancerous or benign tumors, respectively. Then, generators $g_1, \ldots, g_k$ of $\cI_{X^{c}}$ provide a succinct characterization of elements in the class of cancerous tumors and vanish over $X^{c}$. For points $\xx \in X^{b}$, however, we expect that for some $i \in \{1, \ldots, k\}$, the polynomial $g_i$ does not vanish over $\xx$, that is, $g_i(\xx) \neq 0$. Similarly, we construct a second set of generators of $\cI_{X^{b}}$, representing the benign tumors. Then, evaluating both sets of generators over the entire data set $X = X^{c} \cup X^{b}$ and taking the absolute value represents the data set $X$ mapped into a new feature space in which the two classes are (ideally) linearly separable.
In practice, to accommodate the noise in the data set, we construct generators $g$ of the \emph{approximate vanishing ideal} instead of the vanishing ideal, where the approximate vanishing ideal contains polynomials that almost vanish over $X$.

\subsection{Contributions}
In this paper, we introduce the \emph{pairwise conditional gradients approximate vanishing ideal algorithm} (\pcg{}\avi{}), which takes as input a set of points $X\subseteq \R^n$ and constructs a set of generators of the approximate vanishing ideal over $X$. \pcg{}\avi{} repeatedly solves constrained convex optimization problems using an oracle, in our case, via the \emph{pairwise conditional gradients algorithm} (\pcg{}) \citep{guelat1986some,lacoste2015global},
whereas related methods such as the \emph{approximate Buchberger-Möller algorithm} (\abm{}) \citep{limbeck2013computation} and \emph{vanishing component analysis} (\vca{}) \citep{livni2013vanishing} rely on \emph{singular value decompositions} (\svd{}s) to construct generators. 
\pcg{}\avi{} admits the following properties:
\begin{enumerate}
\item \textbf{Generalization Bounds:} We derive generalization bounds for \pcg{}\avi{}, making \pcg{}\avi{} the first algorithm among related methods such as \abm{} and \vca{} with learning guarantees on out-sample data. Under mild assumptions, \pcg{}\avi{}'s generators are guaranteed to also vanish approximately on out-sample data and the combined approach of constructing generators with \pcg{}\avi{} to transform features for a subsequently applied linear kernel \svm{} satisfies a margin bound.

\item \textbf{Compact Transformation:} 
\pcg{}\avi{} constructs a small number of generators with sparse coefficient vectors.

\item \textbf{Blueprint:} For the implementation of the oracle in \pcg{}\avi{}, \pcg{} can be replaced by any other solver of (constrained) convex optimization problems. Thus, our paper gives rise to an entire family of procedures for the construction of generators of the approximate vanishing ideal.

\item \textbf{Empirical Results:}
For the combined approach of constructing generators to transform features for a linear kernel \svm{}, generators constructed with \pcg{}\avi{} are more sparse than and lead to test set classification errors and evaluation times comparable to related methods such as \abm{} and \vca{}.
\end{enumerate}

\subsection{Related Works}\label{section:related_works}

The first algorithm for constructing generators of the vanishing ideal was the \emph{Buchberger-Möller algorithm} \citep{moller1982construction}. Its high susceptibility to noise was first addressed with the \emph{approximate vanishing ideal algorithm} (\avi{}) \citep{heldt2009approximate}. Another algorithm constructing generators of the approximate vanishing ideal is \abm{} \citep{limbeck2013computation}, which offers more control on the extent of vanishing of the constructed generators, requires fewer subroutines, and is easier to implement than \avi{}. \citet{limbeck2013computation} also introduced the \emph{border bases Buchberger-Möller algorithm} (\bbabm{}), which constructs generators by repeatedly solving unconstrained convex optimization problems, similar to \pcg{}\avi{}, which constructs generators by repeatedly solving constrained convex optimization problems. \abm{}, \avi{}, \bbabm{}, and \pcg{}\avi{} are \emph{monomial-aware} algorithms. They require a term ordering and construct generators as linear combinations of monomials.
The term-ordering requirement is the reason why monomial-aware algorithms can produce different outputs depending on the order of the features of the data set, an undesirable property in practice.
The currently most prevalent, and contrary to the monomial-aware algorithms, \emph{monomial-agnostic} approach for constructing generators of the approximate vanishing ideal is \vca{}, introduced by \citet{livni2013vanishing} and improved by \citet{zhang2018improvement}. \vca{} constructs generators not as linear combinations of monomials but of polynomials, that is, generators constructed by \vca{} are, in some sense, polynomials whose terms are other polynomials. As a polynomial-based approach, \vca{} does not require an ordering of terms and the algorithm has been exploited in hand posture recognition, principal variety analysis for nonlinear data modeling, solution selection using genetic programming, and independent signal estimation for blind source separation tasks \citep{zhao2014hand,iraji2017principal,kera2016vanishing, wang2018nonlinear}.
Despite \vca{}'s prevalence, foregoing the term ordering of monomial-based approaches also gives rise to major disadvantages: \vca{} constructs more generators than monomial-aware algorithms, \vca{}'s generators are non-sparse in their polynomial representation, and \vca{} is highly susceptible to the \emph{spurious vanishing problem} \citep{kera2016vanishing, kera2019spurious, kera2020gradient}: Polynomials with small coefficient vector entries that vanish over $X$ still get added to the set of generators even though they do not hold any structurally useful information of the data, and, conversely, polynomials that describe the data well do not get recognized as approximately vanishing generators due to the size of their (large) coefficient vector entries.

Whereas \abm{}, \avi{}, \bbabm{}, and \vca{} construct generators using \svd{}s, \pcg{}\avi{} constructs generators using calls to \pcg{}.
\pcg{} is a variant of the \emph{Frank-Wolfe} \citep{frank1956algorithm} or \emph{conditional gradients} \citep{levitin1966constrained} algorithm (\cg{}). Conditional gradients methods are a family of algorithms that appear as building blocks in a variety of scenarios in machine learning, for example, structured prediction \citep{jaggi2010simple,giesen2012optimizing,harchaoui2012large,freund2017extended}, optimal transport \citep{courty2016optimal,paty2019subspace,luise2019sinkhorn}, and video co-localization \citep{joulin2014efficient,bojanowski2015weakly,peyre2017weakly}.
They have also been extensively studied theoretically, with various algorithmic variations \citep{garber2016linear,bashiri2017decomposition,braun2016lazifying,combettes2020boosting} and several accelerated convergence regimes~\cite{lacoste2013affine, garber2015faster, garber2016linearly}.
Furthermore, Frank-Wolfe algorithms enjoy many appealing properties \citep{jaggi2013revisiting}:
They are easy to implement, projection-free, do not require affine pre-conditioners \citep{kerdreux2021affine}, and variants, for example, \pcg{}, offer a simple trade-off between optimization accuracy and sparsity of iterates.
All these properties make them appealing algorithmic procedures for practitioners that work at scale.
Although Frank-Wolfe algorithms have been considered in polynomial regression, particle filtering, or as pruning methods of infinite RBMS \citep{blondel2017multi, bach2012equivalence, ping2016learning}, their favorable properties have not been exploited in the context of approximate vanishing ideals.

\subsection{Outline}

In Section~\ref{section:preliminaries}, we introduce background material. In Section~\ref{section:convex_optimization}, we reformulate the construction of generators as a convex optimization problem. In Section~\ref{section:the_oavi_algorithm}, we introduce the \emph{oracle approximate vanishing ideal algorithm} (\oavi{}), an algorithmic framework that captures \pcg{}\avi{}.
In Section~\ref{section:the_cgavi_pipeline}, we present the machine learning pipeline of using \oavi{} to transform features for a subsequently applied linear kernel \svm{}.
In Section~\ref{section:generalization_bounds}, we present \oavi{}'s generalization bounds.
In Section~\ref{section:conditional_gradients}, we discuss how conditional gradients can be used to construct generators.
In Section~\ref{sec:note_on_borders}, we discuss the effects of different borders on generator-constructing algorithms.
In Section~\ref{section:numerical_experiments}, we present empirical results. In Section~\ref{section: discussion}, we discuss our work.

\section{Preliminaries}\label{section:preliminaries}
Throughout, let $k,m,n \in \N$. We denote vectors in bold and let $\zeros \in \R^n$ denote the $0$-vector.
Throughout, we use capital calligraphic letters to denote sets of polynomials and denote the sets of terms (or monomials) and polynomials in $n$ variables by $\cT$ and $\cP$, respectively. 
We denote the constant-$1$ monomial by $\oneterm$.
Given a polynomial $f\in \cP$, let $\deg(f)$ denote its \emph{degree}.
We denote the sets of polynomials in $n$ variables of and up to degree $d$ by $\cP_{ d}$ and $\cP_{\leq d}$, respectively. Given a set of polynomials $\cG = \{g_1, \ldots, g_k\}\subseteq \cP \subseteq \cP$, let $\cG_d := \cG \cap \cP_d$ and $\cG_{\leq d}: = \cG \cap \cP_{\leq d}$.
Given a vector $\xx \in \R^n$, define the \emph{evaluation vector} of $\cG$ over $\xx$ as 
$
    \cG(\xx):=(g_1(\xx), \ldots, g_k(\xx))^\intercal\in \R^k.
$
Throughout, let $X = \{\xx_1, \ldots, \xx_m\} \subseteq \R^n$ be a data set consisting of $m$ $n$-dimensional feature vectors. 
Given a polynomial  $f\in \cP$ and a set of polynomials $\cG= \{g_1, \ldots, g_k\}\subseteq \cP$, define the \emph{evaluation vector} of $f$ and the \emph{evaluation matrix} of $\cG$ over $X$ as
$f(X) : = (f(\xx_1), \ldots, f(\xx_m) )^\intercal\in \R^m$ and $\cG(X):=(g_1(X), \ldots, g_k(X))\in \R^{m \times k}$,
respectively. Further, define the \emph{mean squared error} of $f$ over $X$ as
$
    \mse(f,X) :=\frac{1}{|X|} \left\|f(X)\right\|_2^2 = \frac{1}{m} \left\|f(X)\right\|_2^2 , 
$
often referred to as the \emph{extent of vanishing} of $f$ over $X$.
Below, we define approximately vanishing polynomials using the mean squared error.
\begin{definition}
[Approximately vanishing polynomial]
\label{definition:approximately_vanishing_polynomial}
Let $X=\{\xx_1, \ldots, \xx_m\}\subseteq \R^n$ and $\psi \geq 0$. A polynomial $f\in\cP$ is \emph{$\psi$-approximately vanishing} (over $X$) if $\mse(f,X) \leq \psi$.
\end{definition}
Recall the spurious vanishing problem:
For $X=\{\xx_1, \ldots, \xx_m\}\subseteq \R^n$, any polynomial $f$ with $\mse(f, X) > \psi$ can be re-scaled to become $\psi$-approximately vanishing, regardless of its roots, by multiplying all coefficient vector entries of $f$ with $\sqrt{\psi / \mse(f, X)}$. To address the issue, we require an ordering of terms, in our case, the \emph{degree-lexicographical ordering of terms} (DegLex) \citep{cox2013ideals}, denoted by $<_\sig$. For example, for $t_1, t_2 \in \cT$, we have $\oneterm <_\sig t_1 <_\sig t_2 <_\sig t_1^2 <_\sig t_1\cdot t_2 <_\sig t_2^2\ldots$, where $\oneterm$ denotes the constant-$1$ monomial.
Throughout, for $\cO = \{t_1, \ldots, t_{k}\}_\sig \subseteq \cT$, the subscript $\sig$ indicates that $t_1 <_\sig \ldots <_\sig t_{k}$.
\begin{definition}[Leading term (coefficient)]\label{def:ltc}
Let $f = \sum_{i=1}^kc_i t_i\in \cP$, where $k\in \N$ and $c_i \in \R$ and $t_i \in \cT$ for all $i \in \{1, \ldots, k\}$ and let $j\in \{1, \ldots, k\}$ such that $t_j >_\sig t_i$ for all $i \in\{1,\ldots,k\}\setminus\{j\}$.
Then, $t_j$ and $c_j$ are called \emph{leading term} and \emph{leading term coefficient} of $f$, denoted by $\lt(f) = t_j$ and $\ltc(f) = c_j$, respectively.
\end{definition}

For $X=\{\xx_1, \ldots, \xx_m\}\subseteq \R^n$, a polynomial $f$ with $\ltc(f) = 1$ that vanishes $\psi$-approximately over $X$ is called \emph{$(\psi, 1)$-approximately vanishing} (over $X$).
Thus, fixing the leading term coefficient of generators prevents rescaling and addresses the spurious vanishing problem. We formally define the approximate vanishing ideal.

\begin{definition}
[Approximate vanishing ideal]
\label{definition:approximately_vanishing_ideal}
Let $X=\{\xx_1, \ldots, \xx_m\}\subseteq \R^n$ and $\psi \geq 0$.  The \emph{$\psi$-approximate vanishing ideal} (over $X$), $\cI^\psi_X$, is the ideal generated by all $(\psi, 1)$-approximately vanishing polynomials over $X$.
\end{definition}
Note that the definition above subsumes the definition of the vanishing ideal, that is, for $\psi = 0$, $\cI^0_X =\cI_X $ is the vanishing ideal.
In this paper, we introduce an algorithm addressing the following problem.
\begin{problem}
[Setting]
\label{problem:oavi}
Let $X=\{\xx_1, \ldots, \xx_m\}\subseteq \R^n$ and $\psi\geq 0$.
Construct a set of $(\psi, 1)$-approximately vanishing generators of $\cI^\psi_X$.
\end{problem}

\section{Convex Optimization}\label{section:convex_optimization}

To address Problem~\ref{problem:oavi}, we first consider the subproblem of certifying (non-)existence of and constructing generators of the $\psi$-approximate vanishing ideal, $\cI^\psi_X$, when terms of generators are contained in a specific set of terms. As we show below, this subproblem can be reduced to solving a convex optimization problem. 

Let $X=\{\xx_1, \ldots, \xx_m\}\subseteq \R^n$,  $\psi \geq 0$, $\cO = \{t_1, \ldots,t_k\}_\sig \subseteq \cT$, and $t\in \cT$ such that $t >_\sig t_i$ for all $i\in \{1, \ldots, k\}$. 
Suppose there exists a $(\psi, 1)$-approximately vanishing polynomial $f = \sum_{i = 1}^{k} c_i t_i +  t$, where $k\in \N$ and $c_i \in \R$ and $t_i \in \cT$ for all $i \in \{1, \ldots, k\}$, with $\lt(f) = t$ and non-leading terms only in $\cO$. Let $\cc= (c_1,\ldots,c_k)^\intercal$.
Then,
\begin{align}\label{eq:equation_rhs_cop}
    \psi  \geq  \mse(f, X) & =  \ell( \cO(X),   t(X))(\cc) \geq \min_{\vv \in \R^k}  \ell( \cO(X),   t(X))(\vv),
\end{align}
where for $A \in \R^{m \times k}$, $\bb \in \R^m$, and $\xx \in \R^k$, the \emph{squared loss} is defined as
$
    \ell(A, \bb)(\xx):= \frac{1}{m} \|A \xx + \bb\|_2^2.
$
The right-hand side of \eqref{eq:equation_rhs_cop} is a convex optimization problem and in the $\argmin$-version has the form
\begin{equation}\tag{COP}\label{eq:cop}
        \dd \in \argmin_{\vv \in \R^k} \ell(\cO(X),   t(X))(\vv).
\end{equation}
Thus, $g =  \sum_{i = 1}^{k} d_{i} t_i + t$ vanishes $(\psi, 1)$-approximately,  $\lt(g)=t$, and non-leading terms of $g$ are in $\cO$.
We obtain the following result.
\begin{theorem}
[Certificate]
\label{thm:certificate_polynomial}
Let $X = \{\xx_1, \ldots, \xx_m\} \subseteq \R^n$, $\psi\geq 0$, $\cO = \{t_1, \ldots t_k\}_\sig \subseteq \cT$, $t\in \cT$ such that $t >_\sig t_i$ for all $i\in \{1, \ldots, k\}$, and $\dd$ as in \eqref{eq:cop}.
There exists a $(\psi,1)$-approximately vanishing polynomial $f$ with $\lt(f) = t$ and non-leading terms only in $\cO$, if and only if $g =  \sum_{i = 1}^{k} d_{i} t_i +  t$ is $(\psi, 1)$-approximately vanishing.
\end{theorem}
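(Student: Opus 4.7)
The plan is to prove the two directions of the biconditional separately, with the forward direction reducing to the chain of inequalities already displayed in the discussion preceding the statement and the reverse direction being essentially immediate.

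For the reverse direction ($\Leftarrow$), I would simply observe that if $g = \sum_{i=1}^k d_i t_i + t$ is $(\psi,1)$-approximately vanishing, then $g$ itself serves as a witness $f$ with the required properties: by construction the coefficient of $t$ in $g$ equals $1$, and since $t >_\sig t_i$ for every $i\in[k]$, we have $\lt(g) = t$ and $\ltc(g) = 1$, and all other terms lie in $\cO$.

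For the forward direction ($\Rightarrow$), I would start from an arbitrary $(\psi,1)$-approximately vanishing polynomial $f = \sum_{i=1}^k c_i t_i + t$ with $\lt(f) = t$, $\ltc(f) = 1$, and remaining terms in $\cO$, and set $\cc = (c_1,\ldots,c_k)^\intercal$. The key identification is $\mse(f,X) = \frac{1}{m}\|\cO(X)\cc + t(X)\|_2^2 = \ell(\cO(X), t(X))(\cc)$, which follows by writing out $f(X)$ coordinatewise. Combining this with $\mse(f,X) \leq \psi$ and the fact that $\dd$ is a minimizer of \eqref{eq:cop}, I obtain
\begin{align*}
\mse(g,X) = \ell(\cO(X), t(X))(\dd) \leq \ell(\cO(X), t(X))(\cc) = \mse(f,X) \leq \psi.
\end{align*}
Since $g = \sum_{i=1}^k d_i t_i + t$ has $\ltc(g) = 1$ by construction, this shows $g$ is $(\psi,1)$-approximately vanishing.

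There is no genuine obstacle here, since the discussion preceding the theorem statement already sets up both identifications $\mse(f,X) = \ell(\cO(X), t(X))(\cc)$ and $\mse(g,X) = \ell(\cO(X), t(X))(\dd)$. The only thing to be careful about is verifying that $g$ inherits the leading term/coefficient structure, which is automatic from $t >_\sig t_i$ for all $i\in[k]$, and that the minimum in \eqref{eq:cop} is attained so that $\dd$ exists; this follows from convexity and coercivity of the squared loss, or alternatively because $\dd$ can always be taken as any solution of the normal equations.
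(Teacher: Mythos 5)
Your proposal is correct and follows essentially the same argument as the paper, which proves the theorem via the displayed chain of inequalities: the forward direction uses $\mse(g,X)=\ell(\cO(X),t(X))(\dd)\leq\ell(\cO(X),t(X))(\cc)=\mse(f,X)\leq\psi$ with $\dd$ the minimizer of \eqref{eq:cop}, and the reverse direction takes $g$ itself as the witness. Nothing further is needed.
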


\begin{algorithm}[t]
\SetKwInOut{Input}{Input}\SetKwInOut{Output}{Output}
\SetKwComment{Comment}{$\triangleright$\ }{}
\Input{$X = \{\xx_1, \ldots, \xx_m\}\subseteq \R^n$, $\cO = \{t_1, \ldots, t_k\}_\sig \subseteq \cT$, $t\in\cT$ with $t >_\sig t_k$, and $\eps \geq 0$.}
\Output{A polynomial $g\in \cP$ with $\lt(g) = t$, $\ltc(g) = 1$, non-leading terms only in $\cO$, and $\mse(g, X) \leq \min_{\vv \in \R^{k}}\ell(\cO(X), t(X))(\vv) + \eps$.}
\hrulealg
\caption{\cvxoracle{}}
 \label{algorithm:convex_oracle}
\end{algorithm}

The discussion above not only constitutes a proof of Theorem~\ref{thm:certificate_polynomial} but also gives rise to an algorithmic blueprint, \cvxoracle{}, presented in Algorithm~\ref{algorithm:convex_oracle}. 
In practice, any $\eps$-accurate solver of \eqref{eq:cop}, for example, gradient descent, can be used to implement \cvxoracle{} in two steps: First, solve problem \eqref{eq:cop} to $\eps$-accuracy with the solver yielding a vector $\dd\in \R^{k}$. Second, construct and return $g =  \sum_{i = 1}^{k} d_{i} t_i + t$. 
In case \cvxoracle{} is implemented with an accurate solver of \eqref{eq:cop}, that is, $\eps = 0$, by Theorem~\ref{thm:certificate_polynomial}, either $\mse(g,X) > \psi$ and we have proof that no $(\psi, 1)$-approximately vanishing polynomial with leading term $t$ and non-leading terms only in $\cO$ exists, or $g$ is $(\psi, 1)$-approximately vanishing with leading term $t$ and non-leading terms only in $\cO$. 
We denote the output of running \cvxoracle{} with $X, \cO, t$, and $\eps$ by $g=\cvxoracle{} (X, \cO, t, \eps)$. 

\section{The Oracle Approximate Vanishing Ideal Algorithm (\oavi{})}\label{section:the_oavi_algorithm}

We introduce and study the oracle approximate vanishing ideal algorithm (\oavi{}) in Algorithm~\ref{algorithm:oavi}, an algorithmic framework that captures \pcg{}\avi{} and addresses Problem~\ref{problem:oavi}.

\subsection{Algorithm Overview}

We present an overview of \oavi{}, which we refer to as \pcg{}\avi{} when \cvxoracle{} is implemented with \pcg{}.
\subsubsection{Input}
Recall Problem~\ref{problem:oavi}. For $X=\{\xx_1, \ldots, \xx_m\}\subseteq \R^n$, \oavi{} constructs generators of the $\psi$-approximate vanishing ideal, $\cI_X^\psi$, where $\psi$ controls the extent of vanishing of generators via calls to \cvxoracle{}, the accuracy of which is controlled by the tolerance $\eps$.
For ease of presentation, $\psi = \eps = 0$, that is, we focus on constructing generators of the vanishing ideal $\cI_X$ with an accurate solver of \eqref{eq:cop}.

\subsubsection{Initialization} 
We keep track of two sets: $\cO \subseteq \cT$ for the set of terms such that no generator exists with terms only in $\cO$ and $\cG\subseteq \cP$ the set of generators.
Since the constant-$1$ polynomial does not vanish, we initialize $\cO=\{t_1\}_\sig \gets \{\oneterm\}_\sig$ and $\cG \gets \emptyset$, where $\oneterm$ denotes the constant-$1$ monomial.

\subsubsection{Line~\ref{alg:while_loop}}
Given $\cO_{\leq d-1}$ and $\cG_{\leq d-1}$, \oavi{} determines whether generators of degree $d$ with non-leading terms only in $\cO_{\leq d- 1}$ exist.
Checking for $\binom{n}{d}$ monomials of degree $d$ whether they are the leading term of a generator is impractical. Recall the following result, which states that there exists a set of generators $\cG$ of $\cI_X$ such 
that none of the terms of generators in $\cG$ are divisible\footnote{Recall that for $t, u \in \cT$, $t$ \emph{divides (or is a divisor of)} $u$, denoted $t\mid u$, if there exists $v\in \cT$ such that $t \cdot  v = u$. If $t$ does not divide $u$, we write $t\nmid u$.} by leading terms of other generators in $\cG$.
\begin{lemma}[{\citealp[Theorem~2.4.12]{kreuzer2000computational}}]
\label{lemma:purging_property}
Let $X=\{\xx_1, \ldots, \xx_m\}\subseteq \R^n$. There exists a set of generators $\cG\subseteq\cP$ of $\cI_X$ such 
that for $g, h \in \cG$ with $g\neq h$ and $h =\sum_{i=1}^k c_i t_i\in \cG$, where $k\in \N$ and $c_i \in \R$ and $t_i \in \cT$ for all $i \in \{1, \ldots, k\}$, it holds that $\lt(g) \nmid t_i$ for any $i \in \{1, \ldots, k\}$.
\end{lemma}
To construct degree-$d$ generators, we thus only consider terms $t \in \cT_d$ such that for all $g \in \cG_{\leq d-1}$, it holds that $\lt(g) \nmid t$. This is equivalent to requiring that all divisors of degree $\leq d-1$ of $t$ are in $\cO_{\leq d-1}$. 
\begin{definition}
[Border]
\label{definition:border}
Let $\cO\subseteq \cT$. The \emph{(degree-$d$) border} of $\cO$ is defined as
\begin{align}\label{eq:gb}\tag{\grba{}}
    \partial_d \cO:= \{u \in \cT_{d} \colon t \in \cO_{\leq d-1} \text{ for all } t \in \cT_{\leq d-1} \text{ such that } t \mid u\}.
\end{align}
\end{definition}

In other words, we only consider degree-$d$ terms that are contained in the border, which drastically reduces the number of redundant generators constructed compared to a naive approach, see also Section~\ref{sec:note_on_borders}.

\subsubsection{While-Loop}
Suppose that $\partial_d \cO \neq \emptyset$, else \oavi{} terminates.
For each term $u\in\partial_d\cO$, starting with the smallest with respect to $<_\sig$, we construct a polynomial $g$ via a call to \cvxoracle{}. By Theorem~\ref{thm:certificate_polynomial}, there exists a vanishing polynomial $f$ with $\ltc(f)=1$, $\lt(f) = u$, and non-leading terms only in $\cO$, if and only if $g$ is a vanishing polynomial.
If $g$ vanishes, we append $g$ to $\cG$. If $g$ does not vanish, we append $\lt(g) = u$ to $\cO$.

\subsubsection{Termination} When the border is empty, \oavi{} terminates with outputs $\cG$ and $\cO$, that is, $(\cG, \cO) = \oavi{}(X,\psi,\eps)$.

\begin{algorithm}[t]
\SetKwInOut{Input}{Input}\SetKwInOut{Output}{Output}
\SetKwComment{Comment}{$\triangleright$\ }{}
\Input{A data set $X = \{\xx_1, \ldots, \xx_m\} \subseteq\R^n$ and parameters $1 > \psi\geq \eps \geq 0$.}
\Output{A set of polynomials $\cG\subseteq \cP$ and a set of monomials $\cO\subseteq \cT$.}
\hrulealg
{$d \gets 1$} \\
{$\cO=\{t_1\}_\sig \gets \{\oneterm\}_\sig$ \Comment*[f]{the monomial $\oneterm$ is the constant-$1$ monomial}}\\
{$\cG\gets \emptyset$}\\
\While(\label{alg:while_loop}\Comment*[f]{repeat for as long as the border is non-empty}){$\partial_d \cO = \{u_1, \ldots, u_k\}_\sig \neq \emptyset$}{
    \For(\label{alg:for_loop}){$i = 1, \ldots, k$}{
        {$g \gets \cvxoracle{} (X, \cO, u_i, \eps) \in \cP$}\label{alg:cvx_oracle}\\
        \uIf (\Comment*[f]{determine the extent of vanishing of $g$}){$\mse(g, X) \leq \psi$}{
            {$\cG \gets \cG \cup \{g\}$\label{alg:g}}
            } 
        \Else {
        {$\cO\gets (\cO \cup \{u_i\})_{\sig}$\label{alg:o}}
        }
    }
    {$ d \gets d + 1$} 
}
\caption{Oracle Approximate Vanishing Ideal Algorithm (\oavi{})} \label{algorithm:oavi}
\end{algorithm}

\subsection{Analysis}\label{section:analysis}

For the remainder of this section, we analyze \oavi{} and properties of the algorithm's output.
First, we prove that \oavi{} terminates with $|\cO| \leq m$ and $|\cG| \leq |\cO| n$.

\begin{proposition}\label{prop:boundedness_of_O}
Let $X =\{\xx_1, \ldots, \xx_m\} \subseteq \R^n$, $1 > \psi \geq  \eps \geq 0$, and $(\cG, \cO) = \oavi{}(X,\psi,\eps)$. Then, $|\cO| \leq m$ and $|\cG|\leq |\cO|n$.
\end{proposition}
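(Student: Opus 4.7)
The plan is to establish the two bounds separately, with the first bound carrying the substance and the second being essentially combinatorial.

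First I would show $|\cO|\leq m$ by a linear-independence argument on the evaluation vectors $\{t(X) : t \in \cO\} \subseteq \R^m$. Specifically, I would prove by induction on the order in which terms are appended that these vectors are linearly independent. Suppose a term $t$ is about to be appended to $\cO$. By the algorithm, $t$ was passed to \cvxoracle{} as the leading term $u_i \in \partial_d\cO$, the oracle returned some $g = \sum_{t_j \in \cO} d_j t_j + t$, and $\mse(g,X) > \psi$. If $t(X)$ were in the span of $\{t_j(X) : t_j \in \cO\}$ (with $\cO$ the set at that moment, before $t$ is added), there would exist $\cc \in \R^{|\cO|}$ with $\cO(X)\cc + t(X) = \zeros$; this $\cc$ would give $\ell(\cO(X), t(X))(\cc)=0$, so by $\eps$-accuracy of \cvxoracle{} we would obtain $\mse(g,X) \leq 0 + \eps \leq \psi$, contradicting the fact that $t$ is being appended to $\cO$. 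Hence the evaluation vectors in $\R^m$ are linearly independent, yielding $|\cO|\leq m$.

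Second I would bound $|\cG|$ by counting leading terms. Every $g \in \cG$ has its leading term in the border $\partial_d\cO$ for some $d\geq 1$, and the generators constructed during \oavi{} have pairwise distinct leading terms (each border is processed term by term, and $\cO$ only grows). It therefore suffices to bound $|\bigcup_d \partial_d \cO|$, the set of terms that ever appeared as border elements processed by the algorithm. I would define a map $\phi$ from such a leading term $u$ (of degree $d\geq 1$) to the pair $(u/x_{i(u)},\, i(u)) \in \cO \times [n]$, where $i(u)$ is, say, the smallest index with $x_{i(u)} \mid u$. By Definition~\ref{definition:border}, every divisor of $u$ of degree $\leq d-1$ lies in $\cO_{\leq d-1}$, so in particular $u/x_{i(u)} \in \cO$ and $\phi$ is well-defined. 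It is injective because $u$ is recoverable as $x_{i(u)}\cdot (u/x_{i(u)})$. Hence the number of possible leading terms of generators is at most $|\cO|\cdot n$, giving $|\cG| \leq |\cO| n \leq m n$ by the first bound.

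The only delicate point is the first bound: one has to be careful to use the $\eps$-accuracy guarantee of \cvxoracle{} with the assumption $\psi \geq \eps$, and to apply the linear-independence argument at the moment $t$ is appended (so that the candidate coefficients are supported on the current $\cO$). The second bound is a clean combinatorial packing argument, and termination is already guaranteed by Theorem~\ref{thm:termination}, so no additional effort is needed to ensure the counts are finite.
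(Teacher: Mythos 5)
Your proof is correct and follows essentially the same route as the paper: the bound $|\cO|\leq m$ comes from the linear independence of the evaluation vectors of terms in $\cO$ (enforced by the $\eps$-accuracy of \cvxoracle{} together with $\psi\geq\eps$), and $|\cG|\leq|\cO|n$ comes from the fact that leading terms of generators lie in $\bigcup_d\partial_d\cO$, whose size is at most $|\cO|n$. If anything, you spell out two steps the paper only asserts — the inductive justification of linear independence and the explicit injection $u\mapsto(u/x_{i(u)},i(u))$ bounding the border — so no gaps remain.
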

\begin{proof}
Suppose that at some point during \oavi{}'s execution, $|\cO| = m$. Then, $\cO(X)\in \R^{m\times m}$, and for any term $u$ in the current or upcoming border, $u(X)$ can be written as a linear combination of columns of $\cO(X)$, a vanishing polynomial with leading term $u$ and non-leading terms in $\cO$ is detected, and no more terms get appended to $\cO$.
Let $D\in \N$ denote the degree after which \oavi{} terminates. By construction, leading terms of polynomials in $\cG$ are contained in $\bigcup_{d = 1}^D\partial_d\cO$. Hence, $|\cG|\leq \left|\bigcup_{d = 1}^D\partial_d\cO \right|\leq |\cO|n$.
\end{proof}

\subsubsection{Generator Construction}

We next focus on characterizing how \oavi{} addresses Problem~\ref{problem:oavi}.
In practice, we employ solvers that are $\eps$-accurate for $\eps > 0$, in which case \oavi{} addresses Problem~\ref{problem:oavi} up to tolerance $\eps$.

\begin{theorem}
[Maximality]
\label{thm:maximality}
Let $X = \{\xx_1, \ldots, \xx_m\}  \subseteq \R^n$, $1 > \psi \geq  \eps \geq 0$, and $(\cG, \cO) = \oavi{}(X,\psi,\eps)$. Then, all $g \in \cG$ are $(\psi, 1)$-approximately vanishing and there does not exist a $(\psi- \eps, 1)$-approximately vanishing polynomial with terms only in $\cO$.
\end{theorem}
\begin{proof}
By construction, $\cG$ is a set consisting of $(\psi, 1)$-approximately vanishing polynomials.
Suppose towards a contradiction that there exists a $(\psi - \eps, 1)$-approximately vanishing polynomial $f $ with terms only in $\cO$ and $\lt(f) = t \in \cO$. Let $\cU:= \{u \in \cO \mid t >_\sig u\} = \{u_1, \ldots, u_k\}_\sig$. At some point during its execution, \oavi{} constructs a polynomial $g$ with $\lt(g) = t$ and non-leading terms only in $\cU$. 
By $\eps$-accuracy of \cvxoracle{},
$
    \mse(g, X) \leq \min_{\vv\in \R^k}  \frac{1}{m} \left\| \cU(X) \vv + t(X) \right\|_2^2 + \eps \leq \mse(f, X) + \eps  \leq \psi - \eps + \eps \leq \psi,
$
a contradiction.
\end{proof}

Thus, \oavi{} is guaranteed to construct all $(\psi - \eps, 1)$-approximately vanishing polynomials but generators that are $(\lambda, 1)$-approximately vanishing, where $\lambda \in ]\psi - \eps, \psi]$, may not be detected by \oavi{}. In case $\psi = \eps = 0$, \oavi{} constructs a set of generators of the vanishing ideal $\cI_X$, completely addressing Problem~\ref{problem:oavi}. 
The result below is similar to \citet[Theorem~5.2]{livni2013vanishing} but obtained with a different proof technique.

\begin{theorem}[Generating set]\label{theorem:G_generatres_I}
Let $X = \{\xx_1, \ldots, \xx_m\} \subseteq \R^n$, let $\psi =  \eps = 0$, and $(\cG, \cO) = \oavi{}(X,\psi,\eps)$. Then,
$\langle\cG\rangle = \cI_X$ and any $f \in \cP$ can be written as $f = g + h $, where $g\in \langle \cG \rangle$ and 
$h \in \mathspan(\cO)$.
\end{theorem}

\begin{lemma}\label{lemma:decomposition}
Let $X = \{\xx_1, \ldots, \xx_m\} \subseteq \R^n$, let $\psi =  \eps = 0$, $(\cG, \cO) = \oavi{}(X,\psi,\eps)$, and $d\in \N$. Then, $f \in \cP_{\leq d}$ can be written as $f = g + h $, where $g\in \langle \cG_{\leq d} \rangle$ and 
$h \in \mathspan(\cO_{\leq d})$.
\end{lemma}

\begin{proof}
The proof is by induction.
We say that $f\in \cP_{\leq d}$ can be \emph{decomposed} if there exist $g\in \langle \cG_{\leq d} \rangle$ and 
$h \in \mathspan(\cO_{\leq d})$ such that $f = g + h$. Polynomials of degree $1$ are decomposable.
Let $d\in\N$. Suppose that for $i < d$, any polynomial $f\in\cP_{\leq i}$ can be decomposed but that there exist non-decomposable polynomials of degree $d$. Let $f\in\cP_d$ be the non-decomposable polynomial with the degree-lexicographically smallest leading term. If $\lt(f) \in \cO_{d}$, let $p_1 = f - \ltc(f)\cdot\lt(f)\in \cP_{\leq d}$. Since $\lt(p_1) <_\sig \lt(f)$, $p_1$ is decomposable. As sum of decomposable polynomials, $f = p_1 + \ltc(f)\cdot\lt(f)$ is decomposable, a contradiction. If $\lt(f) \not\in \cO_{d}$, we distinguish between two cases. Either $f\in\langle \cG_{\leq d}\rangle$ and $f$ is decomposable, a contradiction, or $f\not\in\langle \cG_{\leq d}\rangle$. In the latter case, since $\lt(f)\not\in \cO_d$ and $f\not\in \cG_{\leq d}$, by Definition~\ref{definition:border}, it holds that $\lt(f)\not\in \partial_d\cO$. Thus, there has to exist a polynomial $p_2 \in \cG_{\leq d-1}$ such that $\lt(p_2) \mid \lt(f)$. Thus, there exists a polynomial $p_3 \in\cP_{\leq d-1}$ such that $\ltc(p_2\cdot p_3)\cdot\lt(p_2\cdot p_3) = \ltc(f)\cdot\lt(f)$. Let $p_4= f - p_2\cdot p_3\in \cP_{\leq d}$. Since $\lt(p_4) <_\sig \lt(f)$,
$p_4$ is decomposable. Since $p_2\in \cG_{\leq d-1}$, it holds that $p_2\cdot p_3\in \langle\cG_{\leq d}\rangle$. As sum of decomposable polynomials, $f = p_4 + p_2 \cdot p_3$ is decomposable, a contradiction.
\end{proof}

\begin{proof}[Proof of Theorem~\ref{theorem:G_generatres_I}]
Consider a polynomial $f\in \cI_X$. By Lemma~\ref{lemma:decomposition}, $f = g + h$, where $g \in \langle\cG\rangle$ and $h \in \mathspan(\cO)$. Thus, $\zeros = f(X) = g(X) + h(X) = h(X)$. By Theorem~\ref{thm:maximality}, $h = \zeros$. Thus, $\langle\cG\rangle = \cI_X$.
\end{proof}

\subsubsection{Computational Complexity}\label{section:computational_complexity}

We present \oavi{}'s time, space, and evaluation complexities below.

\begin{theorem}
[Time and space]
\label{thm:time_and_space}
Let $X = \{\xx_1, \ldots, \xx_m\} \subseteq \R^n$, $1 >\psi \geq \eps \geq 0$, and $(\cG, \cO) = \oavi{}(X,\psi,\eps)$. In the real number model, the time and space complexities of \oavi{} are $O((|\cG|+|\cO|)^2 + (|\cG|+|\cO|)T_\cvxoracle{})$ and $O((|\cG| + |\cO|)m + S_\cvxoracle{})$, where $T_\cvxoracle{}$ and $S_\cvxoracle{}$ are the time and space complexities of \cvxoracle{}, respectively.
\end{theorem}
\begin{proof}
Let $D\in \N$ denote the degree after which \oavi{} terminates. Let $d\in \{1, \ldots, D\}$ and suppose that $\cO_0, \ldots, \cO_{d-1}$ and $\partial_{1}\cO, \ldots, \partial_{d-1}\cO$ and the evaluation matrices $\cO_0(X), \ldots, \cO_{d-1}(X)$ and $\partial_{1}\cO(X), \ldots, \partial_{d-1}\cO(X)$ are already stored. Further, suppose that for all $g = \sum_ic_i t_i\in \cG_{\leq d-1}$, the coefficient vector $\cc$ and evaluation vector $g(X)$ are already stored. To execute Line~\ref{alg:while_loop} of \oavi{}, the algorithm constructs $\partial_d\cO$ and $\partial_d\cO(X)$. To construct the former, \oavi{} constructs
$\cC_d = \{u  = v \cdot t \in \cT_d \mid v\in \cO_1, t \in \cO_{d-1}\}$, requiring time $O(|\cO_{d-1}||\cO_1|)$. Then, to obtain $\partial_d\cO$, \oavi{} removes any $t\in \cT$ from $\cC_d$ such that $t = \lt(g)$ for some $g\in \cG_{\leq d - 1}$, requiring time $O(|\cC_d||\cG_{\leq d- 1}|) \leq O(|\cO_{d-1}||\cO_1||\cG|)$. Then, constructing $\partial_d\cO(X)$ consists of at most $|\cC_d|$ entry-wise multiplications of two $m$-dimensional evaluation vectors of terms in $\cO_{\leq d-1}$, requiring time $O(|\cC_d| m) = O(|\cO_{d-1}||\cO_1|m)$.
Thus, to construct $\bigcup_{d=1}^D\partial_d \cO$ and the evaluation vectors of terms therein, it requires time $ O(\sum_{d = 1}^D|\cO_{d-1}||\cO_1|(|\cG|+m)) = O((|\cG|+|\cO|)(|\cG|+ m)) \leq O((|\cG|+|\cO|)^2 + (|\cG|+|\cO|)m)$.
The interior of the for-loop in Line~\ref{alg:for_loop} gets executed once for each border term, that is, $O(\vert\bigcup_{d=1}^D\partial_d\cO\vert)=O(|\cG| + |\cO|)$ times. Without loss of generality, we assume that the cost of Line~\ref{alg:cvx_oracle} dominates the cost of Lines~\ref{alg:g} and~\ref{alg:o}, that is, $O(|\cO|m) \leq O(T_\cvxoracle{})$. Thus, the total time complexity of \oavi{} is $O((|\cG|+|\cO|)^2 + (|\cG|+|\cO|)m   + (|\cG|+|\cO|) T_\cvxoracle{}) = O((|\cG|+|\cO|)^2 + (|\cG|+|\cO|)T_\cvxoracle{})$.
Throughout \oavi{}'s execution, $O(|\cG| + |\cO| + \sum_{d=1}^D|\cC_d|) = O(|\cG| + |\cO|)$ terms and corresponding evaluation vectors are stored. Further, for all $g = \sum_ic_i t_i \in \cG$, the coefficient vector $\cc$ and evaluation vector $g(X)$ are stored. Since terms are stored in $O(1)$ and coefficient and evaluation vectors are stored in $O(m)$, the total space complexity of \oavi{} is $O((|\cG| + |\cO|)m + S_\cvxoracle{})$.
\end{proof}

\begin{theorem}
[Evaluation]
\label{thm:evaluation_of_O_and_G}
Let $X = \{\xx_1, \ldots, \xx_m\}  \subseteq \R^n$, $1 > \psi \geq \eps \geq 0$, and $(\cG, \cO) = \oavi{}(X,\psi,\eps)$. In the real number model, the evaluation vectors of all monomials in $\cO$ and polynomials in $\cG$ over a set $Z=\{\zz_1, \ldots, \zz_q\}  \subseteq \R^n$ can be computed in times $O( |\cO|q)$ and $O(|\cG||\cO| q )$, respectively.
\end{theorem}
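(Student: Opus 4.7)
The plan is to compute all evaluations by induction on degree, using the fact (implicit in Definition~\ref{definition:border}) that every degree-$d$ monomial encountered during \oavi{}'s execution, whether it ends up in $\cO$ or is the leading term of some $g \in \cG$, factors as $s \cdot t'$ with $s \in \cO_1$ and $t' \in \cO_{d-1}$. Indeed, if $u \in \partial_d\cO$, then any divisor of $u$ in $\cT_{\leq d-1}$ lies in $\cO_{\leq d-1}$, so picking $s$ to be any single-variable factor of $u$ and $t' := u/s$ yields such a decomposition. Since $\cO$ is populated exclusively from the successive borders and since $\lt(g) \in \partial_d\cO$ for every $g \in \cG$, the decomposition is available in both cases.

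First, I would evaluate the monomials of $\cO$ over $Z$ in increasing order of degree. For $t \in \cO_1$, $t(\zz_j)$ is a coordinate of $\zz_j$, readable in $O(1)$ time. For $t \in \cO$ with $\deg(t) = d \geq 2$, the decomposition above gives $t(\zz_j) = s(\zz_j) \cdot t'(\zz_j)$, computable in $O(1)$ per point once the degree-$\leq d-1$ evaluations are stored, which inductively they are. Summing over all $t \in \cO$ and all $j \in [q]$ yields the bound $O(|\cO| q)$.

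Second, I would evaluate each $g \in \cG$. By construction $g = \lt(g) + \sum_{t_i \in \cO} c_i t_i$, and the same factorization argument gives $\lt(g) = s \cdot t'$ with $s \in \cO_1$, $t' \in \cO_{d-1}$, so $\lt(g)(\zz_j)$ is obtainable in $O(1)$ per point from the monomial table built in the previous step. Forming the linear combination $g(\zz_j)$ then costs $O(|\cO|)$ operations per point, giving $O(|\cO| q)$ per generator and $O(|\cG| |\cO| q)$ overall, which dominates the $O(|\cO| q)$ monomial-evaluation cost and thus matches the claimed bound.

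The only substantive ingredient is the factorization claim, and it follows immediately from Definition~\ref{definition:border}; the rest is bookkeeping. I would further note that the decompositions $t = s \cdot t'$ are already determined during \oavi{}'s execution (when the candidate sets $\cC_d$ are built in the proof of Theorem~\ref{thm:computational_complexity}), so they can be stored once and reused here at no extra asymptotic cost, ensuring no recomputation is needed at evaluation time.
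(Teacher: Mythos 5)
Your proof is correct and follows essentially the same route as the paper's: evaluate $\cO$ degree by degree via element-wise products of previously stored evaluation vectors ($O(|\cO|q)$), then obtain each $g \in \cG$ as a linear combination of at most $|\cO|+1$ stored vectors ($O(|\cG||\cO|q)$). The only difference is that you spell out, via the border definition, why the factors $s \in \cO_1$ and $t' \in \cO_{d-1}$ are guaranteed to be available, a point the paper leaves implicit.
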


\begin{proof}
The proof is an adaptation of the proof of \citet[Theorem~5.1]{livni2013vanishing} to \oavi{}. Let $\cO = \{t_1, \ldots, t_k\}_\sig$ with $k \leq m$. Since $t_1 = \oneterm$, $t_1(X)$ can be computed in time $O(q)$. For $d\in \N_{>0}$, the evaluation vector of terms $t\in \cO_d$ over $Z$ can be computed by multiplying the evaluation vectors of two terms in $\cO_{\leq d-1}$ element-wise, requiring time $O(q)$. Hence, the evaluation vectors of all monomials in $\cO$ over $Z$ can be constructed in time $O(|\cO|q )$. Since the evaluation vectors of leading terms of generators in $\cG$ are element-wise multiplications of evaluation vectors in $\cO$, we can construct all leading terms of generators in $\cG$ in time $O((|\cG| + |\cO|)q)$.
The evaluation vectors of generators in $\cG$ are linear combinations of at most $|\cO| + 1$ evaluation vectors of terms. The computation of the linear combinations requires time $O(|\cG||\cO|q)$.
\end{proof}

Thus, the computational complexity of \oavi{} benefits from constructing fewer terms in $\cO$ and generators in $\cG$, increasing the sparsity of generators in $\cG$, and improving the computational complexity of \cvxoracle{}.

\subsubsection{Order Ideals: Exploiting Structure in \oavi{}'s Output}\label{sec:order_ideals}

To derive generalization bounds in Section~\ref{section:generalization_bounds}, we further study the structure of \oavi{}'s output for $X \subseteq \R^n$ and $1 > \psi \geq \eps \geq 0$, that is, $(\cG, \cO) = \oavi{}(X,\psi,\eps)$, which, by Definition~\ref{definition:border}, forms an \emph{order ideal}.

\begin{definition}[Order ideal]\label{def:order_ideal}
A set $\cO\subseteq \cT$ is an \emph{order ideal} if for all $t\in \cO$ and $u\in \cT$ such that $u\mid t$, we have $u\in\cO$. Let $\Xi$ denote the set of all order ideals. For $k\in \N$, let $\Xi_k := \{\cU \in \Xi \colon |\cU| = k\}$.
\end{definition}

\begin{lemma}[Order ideal]\label{lemma:order_ideal}
Let $X = \{\xx_1, \ldots, \xx_m\} \subseteq \R^n$, $1 >\psi \geq \eps \geq 0$, and $(\cG, \cO) = \oavi{}(X,\psi,\eps)$. Then, $\cO$ and
$
    \cO \cup \{ t \in \cT \mid t = \lt(g) \ \text{for some} \ g \in \cG\}
$
are order ideals.
\end{lemma}

Below, we provide a coarse bound on the number of order ideals of size $k$.

\begin{lemma}[Coarse upper bound on $|\Xi_{k}|$]\label{lemma:coarse_Xi}
For $k\in \N$, it holds that $|\Xi_{k}| \leq n^k k ! \leq (n k)^k$.
\end{lemma}
\begin{proof}
By induction, we prove $|\Xi_k| \leq n^{k} k !$ for all $k \in \N$. Since $\{\{\oneterm\}\} = \Xi_1$, the claim holds for $k = 1$.
Suppose that $|\Xi_i| \leq n^{i} i !$ for all $i\leq k \in \N$. 
For any $\cU' = \{t_1, \ldots, t_{k+1}\}_\sig \in \Xi_{k+1}$, removing the degree-lexicographically largest term $t_{k+1}$ yields an order ideal $\cU = \{t_1, \ldots, t_k\}_\sig\in\Xi_{k}$ such that $t_{k+1} = u \cdot v$ for some $u \in \cU_{\deg(t_{k+1}-1)}\subseteq \cU$ and $v\in \cU_1\subseteq \cT_1$.
Thus,
$
    \Xi_{k+1} \subseteq \bigcup_{\cU\in \Xi_k} \bigcup_{\substack{t = u \cdot v\\
u\in \cU, v\in \cT_1}}(\cU \cup \{t\})
$
and 
$
    |\Xi_{k+1}| \leq |\Xi_{k}| \cdot k \cdot |\cT_1|   =  n^k (k!) \cdot  k\cdot n    \leq  n^{k+1} ((k+1)!).
$
\end{proof}

\section{Pipeline}\label{section:the_cgavi_pipeline}

We present a detailed overview of the machine learning pipeline for classification problems using generators of the vanishing ideal to transform features for a linear kernel \svm{} with $\ell_1$-bounded weight vector,
following the notation of \citet{mohri2018foundations}. For ease of exposition, we assume that $\psi = \epsilon = 0$.
Consider an input space $\cX \subseteq \R^n$ and an output or target space $\cY = \{1, \ldots, k\}$. We receive a training sample $X = (\xx_1, \ldots, \xx_m)\in \cX^m$ drawn i.i.d. according to a distribution $\cD$.\footnote{For the theoretical foundation of \oavi{}, we adopt $X \subseteq \R^n$ as a set, in accordance with traditional algebraic geometry definitions. The extension of this analysis to accommodate $X$ as a sample $X \in \mathcal{X}^m$ is straight-forward. This sample notation, which allows for duplicates, is crucial for the statistical learning aspects of our work, particularly for deriving learning guarantees. Thus, our approach effectively marries the rigorous framework of algebraic geometry with the practical necessities of statistical learning.} Given a target function $f\colon \cX \to\cY$, the problem is to determine a \emph{hypothesis} $h\colon \cX \to \cY$ with small \emph{generalization error}
$
    \P_{\xx \backsim \cD}[h(\xx) \neq f(\xx)].
$
For each class $i\in\{1, \ldots, k\}$, let $X^i \subseteq X$ denote the subsample of feature vectors corresponding to class $i$ and construct a set of generators $\cG^i$ for the vanishing ideal $\cI_{X^i}$. Let $\cG:= \bigcup_{i=1}^k \cG^i = \{g_1, \ldots, g_{|\cG|}\}$.
We then transform samples $\xx\in X$ via the feature transformation
\begin{equation}\label{eq:feature_transformation}\tag{FT}
    \xx \mapsto \tilde{\xx} =  \left(|g_1(\xx)|, \ldots, |g_{|\cG|} (\xx)| \right)^\intercal.
\end{equation}
A polynomial $g\in \cG^i$ vanishes over all $\xx \in X^i$ and (ideally) attains non-zero values over points $\xx \in X \setminus X^i$. 
We then train a linear kernel \svm{} on the feature-transformed data $\tilde{X} = (\tilde{\xx}_1, , \ldots, \tilde{\xx}_m)$ with modified target function $\tilde{f}\colon \tilde{\xx} \mapsto f(\xx)$ with $\ell_1$-regularization to keep the number of used features as small as possible. If the underlying classes of $X$ belong to disjoint \emph{algebraic sets}\footnote{A set $U \in \R^n$ is \emph{algebraic} if there exists a finite set of polynomials $\cU\subseteq \cP$, such that $U$ is the set of the common roots of $\cU$.}, they become linearly separable in the feature space corresponding to transformation \eqref{eq:feature_transformation}, and perfect classification accuracy can be achieved on the training set \citep{livni2013vanishing}.

\section{Generalization Bounds}\label{section:generalization_bounds}

In this section, we present modifications to \eqref{eq:cop} that allow \oavi{} to satisfy several generalization bounds.
For $\tau \geq 2$, a polynomial $f = \sum_{i=1}^k c_i t_i\in \cP$ with $\cc = (c_1, \ldots, c_k)^\intercal\in \R^k$ and $t_1, \ldots, t_k \in \cT$ is said to be \emph{$\tau$-bounded} in norm $\|\cdot\|$ if the norm of its coefficient vector is bounded by $\tau$, that is, if $\|f\| := \|\cc\| \leq \tau$. Replacing \eqref{eq:cop} in \oavi{} by 
\begin{equation}\tag{CCOP}\label{eq:ccop}
        \dd \in \argmin_{\vv \in \R^k, \|\vv\| \leq \tau - 1} \ell(\cO(X),   t(X))(\vv)
\end{equation}
allows \oavi{} to create $\tau$-bounded generators in $\|\cdot\|$. 
Under mild assumptions, we demonstrate that \oavi{} run with \eqref{eq:ccop} for the norm $\|\cdot\|_1$ and $\tau \geq 2$ admits several learning guarantees, relying on the fact that the constructed generators have coefficient vectors that are bounded in the $\ell_1$-norm.

As a gentle introduction, consider a data set $X = \{\xx_1, \ldots, \xx_m\}\subseteq [-1, 1]^n$ and a generator $g$ constructed by a generator-constructing algorithm. If $|g(\xx)|$ is small for  $\xx\in X$, we expect $|g(\yy)|$ to also be small for $\yy\in[-1, 1]^n$ that is close to $\xx$. As we demonstrate below, this is indeed the case for \oavi{} solving \eqref{eq:ccop} for the norm $\|\cdot\|_1$ and $\tau \geq 2$.

\begin{lemma}[A simple learning guarantee]\label{lemma:simple_learning_guarantee}
Let $X = \{\xx_1, \ldots, \xx_m\} \subseteq [-1, 1]^n$ and let $\tau \geq 2$.
Let $1 > \psi \geq \eps \geq 0$ and let $(\cG, \cO) = \oavi{}(X, \psi, \eps)$ be the output of running \oavi{} solving \eqref{eq:ccop} for the norm $\|\cdot\|_1$ and $\tau$.
Then, for any $\yy \in [-1, 1]^n$, $\xx\in X$, and $g\in \cG$ of degree $d\in \N$, it holds that
$
    |g(\yy)| \leq |g(\xx)|  + d\tau \|\yy - \xx \|_\infty.
$
\end{lemma}
\begin{proof}
Let $g = \sum_{i = 1}^k c_i t_i \in \cG$ be a polynomial of degree $d\in \N$, where $k\in \N$ and $c_i \in \R$ and $t_i \in \cT$ for all $i \in \{1, \ldots, k\}$. By the mean value theorem, for any $\xx, \yy\in [-1, 1]^n$, 
$
    |g(\yy)| \leq |g(\xx)| + \max_{\zz \in[-1, 1]^n} |\langle \nabla f(\zz), \yy - \xx \rangle|.
$
By the definition of the dual norm,
$
    |g(\yy)| \leq |g(\xx)| + \max_{\zz \in[-1, 1]^n}\|\nabla g(\zz)\|_1 \|\yy - \xx \|_\infty.
$
Since $g$ is of degree $d$ and $\|g\|_1 =\|\cc\|_1 \leq \tau$, it holds that $\max_{\zz \in[-1, 1]^n}\|\nabla g(\zz)\|_1 \leq d \tau$ and the result follows.
\end{proof}

Since \abm{}, \avi{}, \bbabm{}, \vca{}, and \oavi{} solving \eqref{eq:cop} do not construct generators with coefficient vectors that are bounded in $\ell_1$-norm, Lemma~\ref{lemma:simple_learning_guarantee} does not apply to these algorithms.
For the remainder of this section, we derive two additional learning guarantees for \oavi{} solving \eqref{eq:ccop} for the norm $\|\cdot\|_1$ and $\tau \geq 2$ that also rely on the coefficient vectors of generators to be bounded in $\ell_1$-norm. Thus, the upcoming generalization bounds do not apply to \abm{}, \avi{}, \bbabm{}, \vca{}, and \oavi{} solving \eqref{eq:cop}, making our algorithm a theoretically better supported alternative to related generator-constructing techniques. Deriving learning guarantees for other generator-constructing algorithms remains an open problem.

\subsection{Generators Vanish over Out-Sample Data}\label{section:vanishing_property}
In this section, under mild assumptions, we prove that generators constructed by \oavi{} not only vanish over in-sample (or training) data but also over out-sample data. 
Let $\cX \subseteq [-1, 1]^n$, let $k \in \N_{> 0}$, and let $\tau > 0$. The following hypothesis class captures generators constructed with \oavi{} solving \eqref{eq:ccop} for the norm $\|\cdot\|_1$ and $\tau$ terminated early to guarantee that $|\cO| \leq k - 1$\footnote{The assumption $|\cO| \leq k - 1\in \N$ can, for example, be achieved by terminating \oavi{} when a certain degree is reached.
}:
\begin{align}\tag{HC-generators}\label{eq:generators_hc}
    \cH & = \left\{ \cX \ni \xx \mapsto \cc^\intercal \cU(\xx) \colon \|\cc\|_1 \leq \tau, \cU\in \Xi_{k} \right\}.
\end{align}
Below, we compute the empirical \emph{Rademacher complexity} of the hypothesis class $\cH$ as in \eqref{eq:generators_hc}.

\begin{lemma}
[Rademacher complexity of \ref{eq:generators_hc}]
\label{lemma:rademacher_complexity_generators}
Let $\cX \subseteq [-1, 1]^n$, let $\tau > 0$, let $k\in \N_{> 0}$, let $\cH$ be as in \eqref{eq:generators_hc}, and let 
$X = (\xx_1, \ldots, \xx_m) \in \cX^m$ be drawn i.i.d. according to a distribution $\cD$. Then, the empirical Rademacher complexity of $\cH$ is bounded as follows:
$
    \hat{\cR}_X(\cH) \leq \tau  \sqrt{2\log (2  k |\Xi_{k}|) / m}.
$
\end{lemma}
\begin{proof}
The proof follows the line of arguments of \citet[Theorem 11.15]{mohri2018foundations}, modified to our setting.
Let $\ssig = (\sig_1, \ldots, \sig_k)^\intercal\in \{-1, 1\}^k$ be a vector of uniform random variables. It holds that
\begin{align*}
    \hat{\cR}_X(\cH) & = \frac{1}{m}\E_\ssig \left[\sup_{\|\cc\|_1 \leq\tau} \sup_{\cU \in \Xi_{k}} \sum_{i = 1}^m \sig_i \cc^\intercal \cU(\xx_i) \right] \\
    & = \frac{\tau}{m}\E_\ssig \left[\sup_{\cU \in \Xi_{k}} \left\|\sum_{i = 1}^m \sig_i \cU(\xx_i) \right\|_\infty\right] & \text{$\triangleright$ by the definition of the dual norm}\\
    & = \frac{\tau}{m}\E_\ssig \left[ \sup_{\cU \in \Xi_{k}} \max_{j\in \{1, \ldots, k\}}\max_{s\in \{-1,1\}} s \sum_{i = 1}^m \sig_i  \cU(\xx_i)_j \right] & \text{$\triangleright$ by the definition of $\|\cdot\|_\infty$ and $\abs{\cdot}$}\\
    & = \frac{\tau}{m}\E_\ssig \left[ \sup_{\zz \in A} \sum_{i = 1}^m \sig_i  z_i \right],
\end{align*}
where $A = \left\{s (\cU(\xx_1)_j,\ldots, \cU(\xx_m)_j)^\intercal \colon \cU \in \Xi_{k}, j\in \{1, \ldots, k\}, s\in \{-1, 1\}\right\}$. Since $\|\cU(\xx)\|_\infty \leq 1$ for all $\xx \in \cX \subseteq [-1, 1]^n$, for any $\zz \in A$, it holds that $\|\zz\|_2 \leq \sqrt{m}$. By \citet[Theorem 3.7]{mohri2018foundations}, since $A$ contains at most $2 k |\Xi_k|$ elements, we have
$
    \hat{\cR}_X(\cH) \leq \tau  \sqrt{2\log (2  k |\Xi_{k}|) / m}.
$
\end{proof}
Under mild assumptions, generators constructed by \oavi{} are contained in the hypothesis class \eqref{eq:generators_hc} and, in expectation, vanish approximately over both in-sample and out-sample data. 
\begin{theorem}
[Vanishing property]
\label{thm:vanishing_property}
Let $\cX \subseteq [-1, 1]^n$, let $\tau \geq 2$, let $k\in \N_{>0}$, and let
$X = (\xx_1, \ldots, \xx_m) \in \cX^m$ be drawn i.i.d. according to a distribution $\cD$.
Let $1 > \psi \geq \eps \geq 0$ and let $(\cG, \cO) = \oavi{}(X, \psi, \eps)$ be the output of running \oavi{} solving \eqref{eq:ccop} for the norm $\|\cdot\|_1$ and $\tau$ terminated early to guarantee that $|\cO| \leq k - 1$. 
Then, for any $\delta > 0$, with probability at least $1 - \delta$, the following inequality holds for all $g\in \cG$:
\begin{align*}
    \E_{\xx \backsim \cD}\left[\mse(g,\{\xx\})\right]\leq \mse(g, X) + 4 \tau^2  \sqrt{\frac{2\log (2  k |\Xi_{k}|)}{m}} + 12 \tau^2 \sqrt{\frac{\log (2 \delta^{-1})}{2m}}.
\end{align*}
\end{theorem}
\begin{proof}

For all $h\in \cH$ and $\xx\in \cX$, it holds that $|h(\xx)| \leq \tau$. Thus, plugging Lemma~\ref{lemma:rademacher_complexity_generators} into Theorem~11.3 of \citet{mohri2018foundations} with the $2\tau$-Lipschitz continuous loss function $L(y, y'):= |y - y'|^2$ for $y, y' \in [-\tau, \tau]$ implies that for any $\delta > 0$, with probability at least $1 - \delta$, the following inequality holds for all $h\in \cH$:
$
     \E_{\xx \backsim \cD}\left[h(\xx)^2\right]\leq \frac{1}{m} \sum_{i = 1}^m h(\xx_i)^2 + 4 \tau^2  \sqrt{2\log (2  k |\Xi_{k}|) / m} + 12 \tau^2 \sqrt{\log (2 \delta^{-1}) / 2m}.
$
Let $j, k \in \N$ with $j<k$. Since for any $\cV\in \Xi_j$, there exists $\cU \in \Xi_k$ such that $\cV \subseteq \cU$, any $g\in \cG$ can be written in the form
$g = \sum_{i = 1}^{k} c_i t_i$, where $\|\cc\|_1 \leq \tau$ and $t_1, \ldots, t_k \in \cU \in \Xi_{k}$. Thus, $\cX\ni \xx \mapsto g(\xx) = \cc^\intercal \cU(\xx)$ is contained in $\cH$, proving the theorem.
\end{proof}

When \oavi{} solving \eqref{eq:ccop} for the norm $\|\cdot\|_1$ and $\tau \geq 2$  is terminated early to guarantee $|\cO|\leq k-1$ for $k\in\N_{>0}$ independent of $m$, plugging Lemma~\ref{lemma:coarse_Xi} into Theorem~\ref{thm:vanishing_property} results in an explicit generalization bound on the extent of vanishing of constructed generators.

\subsection{Margin Bound for \oavi{} with Linear Kernel \svm{}}\label{section:margin_bound}
In this section, we derive a margin bound for using $\tau$-bounded generators of the approximate vanishing ideal to transform features for a linear kernel \svm{}.
We require the following two definitions from \citet{mohri2018foundations}.
\begin{definition}
[Margin loss function]
For any $\rho > 0$, the \emph{$\rho$-margin loss} is the function $L_\rho \colon \R \times \R \to \R_{>0}$ defined for all $y, y'\in \R$ as
$
    L_\rho(y, y') = \min \{ 1, \max\left\{0, 1 - y y'/\rho \right\} \}.
$
\end{definition}
\begin{definition}
[Empirical margin loss]
Let $\cX \subseteq [-1, 1]^n$, let $\cY = \{-1, +1\}$, let 
$X=(\xx_1,\ldots,\xx_m)\in\cX^m$ be drawn i.i.d. according to a distribution $\cD$, let $f\colon \cX\to \cY$ be a target function, and let $h\colon\cX\to\cY$ be a hypothesis. The \emph{empirical margin loss} of $h$ is defined as 
$
    \hat{R}_{X, f, \rho}(h) = \frac{1}{m} \sum_{i = 1}^m L_\rho (h(\xx_i), f(\xx_i)).
$
\end{definition}

Recall the machine learning pipeline for classification explained in Section~\ref{section:the_cgavi_pipeline}. For ease of exposition, we restrict ourselves to the binary classification setting. Let $\cX \subseteq [-1, 1]^n$, let $\cY = \{-1, +1\}$, let $k\in \N_{>0}$, and let $\tau \geq 2$. We are given a training sample $X = (\xx_1, \ldots, \xx_m)\in\cX^m$ drawn $i.i.d.$ from some unknown distribution $\cD$ and a target function $f\colon\cX\to\cY$. For samples $X^{\pm 1}\subseteq X$ belonging to classes $\pm 1$, we construct sets of generators $\cG^{\pm 1}$ using \oavi{} solving \eqref{eq:ccop} for the norm $\|\cdot\|_1$ and $\tau$ terminated early to guarantee that $|\cO| \leq k - 1$. In other words, for $1>\psi\geq \eps \geq 0$, let $(\cG^{\pm1}, \cO^{\pm1}) = \oavi{}(X^{\pm 1}, \psi, \eps)$ such that $|\cO^{\pm1}| \leq k-1$ and $\|g\|_1 \leq \tau$ for all $g\in \cG := \cG^{-1} \cup \cG^{+1}$. Then, $|\cG^{\pm 1}|\leq kn$, $|\cG|\leq 2kn$, and any $g\in \cG$ can be written as $g=\sum_{i=1}^k c_i t_i$, where $\|\cc\|_1\leq \tau$ and $t_1, \ldots, t_k \in \cU$ for some $\cU \in \Xi_k$.
Then, we transform samples $\xx \in X$ with the feature transformation \eqref{eq:feature_transformation} associated with $\cG$ and apply a linear kernel \svm{} to the feature-transformed data. This approach is captured by the hypothesis class
\begin{align}\tag{HC}\label{eq:rademacher_hc}
    \cH = \left\{\cX \ni \xx \mapsto \ww^\intercal \begin{pmatrix}
    \left\lvert\cc_1^\intercal \cU_1(\xx)\right\rvert \\
    \vdots \\
    \left\lvert\cc_{2kn}^\intercal \cU_{2kn}(\xx)\right\rvert
    \end{pmatrix} 
    \colon \|\ww\|_{1} \leq \Lambda, \|\cc_j\|_1\leq \tau \ \text{and} \ \cU_j \in \Xi_{k} \ \text{for all} \ j\in [2kn] \right\},
\end{align}
where $0 < \Lambda \in \R$ and $[2kn]:= \{1, \ldots, 2kn\}$.
Below, we bound the empirical Rademacher complexity of \eqref{eq:rademacher_hc}. 

\begin{lemma}
[Rademacher complexity of \ref{eq:rademacher_hc}]
\label{lemma:rademacher_complexity}
Let $\cX \subseteq [-1, 1]^n$, let $\Lambda >0$, let $\tau \geq 2$, let $k\in \N_{>0}$, let $\cH$ be as in \eqref{eq:rademacher_hc}, and let 
$X = (\xx_1,\ldots, \xx_m)\in\cX^m$ be drawn i.i.d. according to a distribution $\cD$.
Then, the empirical Rademacher complexity of $\cH$ is bounded as follows:
$
    \hat{\cR}_X(\cH) \leq 2\Lambda\tau  \sqrt{2\log (2  k |\Xi_{k}|)/m}.
$
\end{lemma}
\begin{proof}
The proof is an adaptation of the proof of \citet[Theorem~6.12]{mohri2018foundations} to the hypothesis class $\cH$ as in \eqref{eq:rademacher_hc}. Let $\ssig = (\sig_1, \ldots, \sig_m)^\intercal\in \{-1, 1\}^m$ be a vector of uniform random variables. 
We write $\sup_{\substack{\|\cc_j\|_1\leq \tau \\ \forall j\in [2kn] }}$ for $\sup_{\|\cc_1\|_1\leq \tau} \cdots \sup_{\|\cc_{2kn}\|_1\leq \tau}$ and $\sup_{\substack{\cU_j\in \Xi_k\\ \forall j\in [2kn] }}$ for $\sup_{\cU_1\in \Xi_k} \cdots \sup_{\cU_{2kn}\in \Xi_k}$.
It holds that
\begin{align*}
    \hat{\cR}_X(\cH) & = \frac{1}{m}\E_\ssig \left[\sup_{\|\ww\|_1 \leq \Lambda} \sup_{\substack{\|\cc_j\|_1\leq \tau \\ \forall j\in [2kn] }} \sup_{\substack{\cU_j\in \Xi_k \\ \forall j\in [2kn] }} \sum_{i = 1}^m \sig_i\ww^\intercal  \begin{pmatrix}
    \left\lvert\cc_1^\intercal \cU_1(\xx_i)\right\rvert \\
    \vdots \\
    \left\lvert\cc_{2kn}^\intercal \cU_{2kn}(\xx_i)\right\rvert
    \end{pmatrix} \right]\\
    & \leq \frac{\Lambda}{m}\E_\ssig \left[\sup_{\substack{\|\cc_j\|_1\leq \tau \\ \forall j\in [2kn] }} \sup_{\substack{\cU_j\in \Xi_k \\ \forall j\in [2kn] }} \left\|\sum_{i = 1}^m \sig_i \begin{pmatrix}
    \left\lvert\cc_1^\intercal \cU_1(\xx_i)\right\rvert \\
    \vdots \\
    \left\lvert\cc_{2kn}^\intercal \cU_{2kn}(\xx_i)\right\rvert
    \end{pmatrix}\right\|_\infty \right] & \text{$\triangleright$ by the definition of the dual norm}\\
    & = \frac{\Lambda}{m}\E_\ssig \left[\sup_{\|\cc\|_1\leq \tau} \sup_{\cU\in \Xi_k } \left\lvert\sum_{i = 1}^m \sig_i \left\lvert\cc^\intercal \cU(\xx_i)\right\rvert\right\rvert \right] & \text{$\triangleright$ by the definition of $\|\cdot\|_\infty$}\\
    & \leq \frac{2\Lambda}{m}\E_\ssig \left[\sup_{\|\cc\|_1\leq \tau} \sup_{\cU\in \Xi_k } \left\lvert\sum_{i = 1}^m \sig_i \cc^\intercal \cU(\xx_i)\right\rvert \right] & \text{$\triangleright$ by \citet{ledoux1991probability}}\\
    & \leq \frac{2\Lambda\tau}{m}\E_\ssig \left[\sup_{\cU\in \Xi_{k} } \left\|\sum_{i = 1}^m \sig_i \cU(\xx_i)\right\|_\infty \right] & \text{$\triangleright$ by the definition of the dual norm}\\
    & = \frac{2\Lambda\tau}{m}\E_\ssig \left[\sup_{\cU\in \Xi_{k} } \max_{j\in \{1, \ldots, k\}} \max_{s\in \{-1,1\}} s \sum_{i = 1}^m \sig_i (\cU(\xx_i))_j \right] & \text{$\triangleright$ by the definition of $\|\cdot\|_\infty$ and $\lvert\cdot\rvert$}\\
    & = \frac{2\Lambda\tau}{m}\E_\ssig \left[\sup_{\zz\in A} \sum_{i = 1}^m \sig_i z_i\right], 
    \end{align*}
where $A = \left\{s (\cU(\xx_1)_j,\ldots, \cU(\xx_m)_j)^\intercal \colon \cU \in \Xi_{k}, j\in \{1, \ldots, k\}, s\in \{-1, 1\}\right\}$. Since $\|\cU(\xx)\|_\infty \leq 1$ for all $\xx \in \cX \subseteq [-1, 1]^n$, for any $\zz \in A$ it holds that $\|\zz\|_2 \leq \sqrt{m}$. By \citet[Theorem 3.7]{mohri2018foundations}, since $A$ contains at most $2 k |\Xi_{k}|$ elements, we have
$
    \hat{\cR}_X(\cH) \leq 2\Lambda\tau  \sqrt{2\log (2  k |\Xi_{k}|) / m}.
$
\end{proof}

Lemma~\ref{lemma:rademacher_complexity}, in combination with \citet[Theorem~5.8]{mohri2018foundations}, implies the following margin bound and the observation that the \emph{true risk} is $R(h) = \E_{\xx\backsim \cD} [\chi_{f(\xx)h(\xx) \leq 0}] = \P_{\xx \backsim \cD}[\sgn(h (\xx)) \neq f(\xx)]$, where $\chi_\omega$ is the indicator function of the event $\omega$ and $\sgn(\cdot)$ is the sign function.

\begin{theorem}[Margin bound]\label{thm:margin_bound}
Let $\cX \subseteq [-1, 1]^n$, let $\cY = \{-1, +1\}$, let $\Lambda >0$, let $\tau \geq 2$, let $k\in \N_{>0}$, let $\cH$ be as in \eqref{eq:rademacher_hc}, and
let $X = (\xx_1, \ldots, \xx_m)\in \cX^m$ be drawn i.i.d. according to a distribution $\cD$, and let $f\colon \cX \to\cY$ be a target function.
Fix $\rho > 0$. Then, for any $\delta > 0$, with probability at least $1 - \delta$, the following inequality holds for all $h\in \cH$:
\begin{align*}
    \P_{\xx \backsim \cD}[\sgn(h (\xx)) \neq f(\xx)] \leq \hat{R}_{X,f,\rho} (h) + \frac{4\Lambda \tau}{\rho} \sqrt{\frac{2\log (2  k |\Xi_{k}|)}{m}} + 3 \sqrt{\frac{\log (2 \delta^{-1})}{2m}}.
\end{align*}
\end{theorem}
Plugging Lemma~\ref{lemma:coarse_Xi} into Theorem~\ref{thm:margin_bound} yields an explicit margin bound for features transformed with \oavi{} solving \eqref{eq:ccop} for the norm $\|\cdot\|_1$ and $\tau \geq 2$ terminated early to guarantee that $|\cO| \leq k - 1$ for $k\in\N_{>0}$ independent of $m$ and a subsequently applied linear kernel \svm{}.

\section{Conditional Gradients}\label{section:conditional_gradients}

\begin{algorithm}[t]
\SetKwInOut{Input}{Input}\SetKwInOut{Output}{Output}
\SetKwComment{Comment}{$\triangleright$\ }{}
\Input{A smooth and convex function $f$, a set of atoms $\cA\subseteq \R^n$, a vertex $\xx_0 \in \cA$, and $T\in \N$.}
\Output{A point $\xx_T\in \conv(\cA)$.}
\hrulealg
{$S^{(0)}\gets \{ \xx_0\}$}\\
{$\lambda_{\vv}^{(0)} \gets 1$ for $\vv = \xx_0$ and $0$ otherwise}\\
\For{$t = 0,\ldots, T-1$}{
 {$ \ss_t\gets \argmin_{\ss \in \cA} \langle \nabla f(\xx_t), \ss \rangle$}\label{alg:pfw_fw_vertex}\Comment*[f]{FW vertex}\\
 {$ \dd^{FW}_{t} \gets \ss_t - \xx_t$}\\
 {$ \vv_t\gets \argmax_{\vv\in S^{(t)}}\langle \nabla f(\xx_t), \vv\rangle$}\Comment*[f]{away vertex}\\
 {$ \dd^{AW}_{t} \gets  \xx_t - \vv_t$}\label{alg:pfw_aw_vertex}\\
 {$\dd_t \gets \dd^{FW}_{t} + \dd^{AW}_{t}$}\label{alg:pfw_pw_direction}\\
 {$\gamma_t = \argmin_{\gamma \in [0,\lambda_{\vv_t}]} f(\xx_t + \gamma \dd_t)$}\Comment*[f]{line search}\\
 {$\xx_{t+1} \gets \xx_t + \gamma_t \dd_t$}\\
 {$\lambda_{\vv}^{(t+1)} \gets \lambda^{(t)}_{\vv}$ for $\vv\in\cA\setminus\{\ss_t, \vv_t\}$}\\
 {$\lambda_{\ss_t}^{(t+1)} \gets \lambda^{(t)}_{\ss_t} + \gamma_t$}\\
 {$\lambda_{\vv_t}^{(t+1)} \gets \lambda^{(t)}_{\vv_t} - \gamma_t$}\\
 {$S^{(t+1)} \gets \{\vv\in \cA \mid \lambda_{\vv}^{(t+1)}>0\}$} \Comment*[f]{update active set}
}
\caption{Pairwise Conditional Gradients Algorithm (\pcg{})}
 \label{algorithm:pairwise_frank_wolfe}
\end{algorithm}

Any $\eps$-accurate solution to \eqref{eq:cop} or \eqref{eq:ccop} can be used to construct the polynomial $g$ returned by \cvxoracle{}, allowing the practitioner to choose the best solver for any given task.
Our goal is to construct a set of generators $\cG$ consisting of few and sparse polynomials to obtain a compact representation of the approximate vanishing ideal.
The former property is achieved by restricting leading terms of generators to be in the border. We address the latter property with our choice of solver for \eqref{eq:cop} or \eqref{eq:ccop}. 

\subsection{Sparsity}\label{section:sparsity}
To do so, we formalize the notion of sparsity. 
Consider the execution of \oavi{} and suppose that, currently, $\cO = \{t_1, \ldots, t_k\}_\sig$ and $g = \sum_{i = 1}^{k} c_i t_i + t$, where $k\in \N$ and $c_i \in \R$ and $t_i \in \cT$ for all $i \in \{1, \ldots, k\}$, with $\lt(g) = t\not\in \cO$ gets appended to $\cG$. Let the number of entries, the number of zero entries, and the number of non-zero entries in the coefficient vector of $g$ be denoted by $g_e:= k$, $g_z:= |\{c_i = 0 \colon i\in \{1, \ldots, k\}\}|$, and $g_n:= g_e - g_z$, respectively.
We define the \emph{sparsity} of $g$ as
$
\spar(g):= g_z/g_e \in [0, 1].
$
Larger $\spar(g)$ indicates a more thinly populated coefficient vector of $g$.
Recall that for classification as in Section~\ref{section:the_cgavi_pipeline} we construct sets of generators $\cG^i$ corresponding to classes $i$ and then transform $X$ via \eqref{eq:feature_transformation} using all polynomials in $\cG := \bigcup_i \cG^i$. We define the \emph{sparsity} of $\cG$ as
\begin{equation}\tag{SPAR}\label{eq:sparsity}
    \spar (\cG) : = \Big(\sum_{g \in \cG}g_z\Big) / \Big(\sum_{g \in \cG} g_e\Big)\in[0, 1].
\end{equation}
To quickly create sparse generators, we implement \cvxoracle{} with the pairwise conditional gradients algorithm (\pcg{}), see Algorithm~\ref{algorithm:pairwise_frank_wolfe}.

\subsection{The Pairwise Conditional Gradients Algorithm (\pcg{})}\label{section:pfw}
Let $f\colon \R^n \to \R$ be a convex and smooth function and $\cA = \{\vv_1, \ldots, \vv_k\}\subseteq \R^n$ a set of vectors. Suppose that $f$ is differentiable in an open set containing $\conv(\cA)$, where $\conv(\cA)$ is the convex hull of $\cA$.  Conditional gradients algorithms (\cg) are a family of methods that solve
\begin{align}\label{eq:cg_opt}
    \xx^*\in \min_{\xx \in \conv(\cA)} f(\xx).
\end{align}
For an $L$-smooth objective and feasible region $\conv(\cA)$ with diameter $\delta > 0$, vanilla \cg{} with line-search step-size rule constructs an $\eps$-accurate solution to \eqref{eq:cg_opt} in less than $\frac{2L\delta^2}{\eps}$ iterations \citep{jaggi2013revisiting}. There are several algorithmic variants of \cg{} that further improve sparsity. Here, we focus on the pairwise conditional gradients algorithm (\pcg{}), an algorithmic variant of \cg{} known for its tendency to produce highly sparse solutions when solving \eqref{eq:ccop}. Another benefit of using \pcg{} is that the algorithm converges linearly for \eqref{eq:ccop}, albeit with constants that strongly depend on the problem dimension \citep{lacoste2015global}. In our numerical experiments, the dependence on the dimension of the problem does not cause issues.

For better understanding of \pcg{} and how the method constructs sparse iterates, we present a short overview of Algorithm~\ref{algorithm:pairwise_frank_wolfe}.
At iteration $t = 0,\ldots, T-1$, \pcg{} writes the current iterate, $\xx_t$, as a convex combination of elements of $\cA$, that is, $\xx_t = \sum_{\vv \in \cA}\lambda_\vv^{(t)} \vv$, where $\sum_{\vv \in \cA}\lambda_\vv^{(t)} = 1$ and $\lambda_\vv^{(t)}\in [0, 1]$ for all $\vv\in \cA$. At iteration $t$, a vertex $\vv$ whose corresponding weight $\lambda_\vv^{(t)}$ is not zero is referred to as an \emph{active vertex} and the set $S^{(t)} =  \{\vv\in \cA \mid \lambda_{\vv}^{(t)}>0\}$ is referred to as the \emph{active set} at iteration $t$. During each iteration, \pcg{} determines two vertices requiring access to a first-order oracle and a linear minimization oracle. 
In Line~\ref{alg:pfw_fw_vertex}, \pcg{} determines the Frank-Wolfe vertex, which minimizes the scalar product with the gradient of $f$ at iterate $x_t$. Taking a step of appropriate size towards the Frank-Wolfe vertex, in the Frank-Wolfe direction, reduces the objective function value. In Line~\ref{alg:pfw_aw_vertex}, \pcg{} determines the away vertex in the active set, which maximizes the scalar product with the gradient of $f$ at iterate $x_t$. Taking a step away from the away vertex, in the away direction, reduces the objective function value. In Line~\ref{alg:pfw_pw_direction}, \pcg{} combines the away direction and the Frank-Wolfe direction into the pairwise direction and takes a step with optimal step size in the pairwise direction, shifting weight from the away vertex to the Frank-Wolfe vertex. In each iteration, \pcg{} thus only modifies two entries of the iterate $\xx_t$, which is the main reason why \pcg{} tends to return a sparse iterate $\xx_T$.
When using \pcg{} as \cvxoracle{} for \oavi{}, we implement \cvxoracle{} as follows: Run \pcg{} with $f$, $\cA$ the set of vertices of the $\ell_1$-ball of radius $\tau  - 1$, $\xx_0 = (\tau - 1,0 \ldots, 0)^\intercal$,  and $T \in \N$ such that \pcg{} achieves $\eps$-accuracy to obtain $\xx_T$. Then, $(\xx_T^\intercal, 1)^\intercal$ is the coefficient vector of the polynomial $g$ returned by \cvxoracle{}.

\section{On Borders}\label{sec:note_on_borders}

In this section, we focus on the border defined in Definition~\ref{definition:border} and how it compares to borders used in other generator-constructing algorithms.

We first compare the border used in \oavi{} to the border used in \vca{}. Since \vca{} is monomial-agnostic, the set corresponding to $\cO$ in \vca{} is not a set of monomials but a set of polynomials that provably do not vanish approximately over the data, and the border is defined as
\begin{align*}
    \partial_d\cO := \{u = v \cdot t \in \cT_d \mid v\in \cO_{1}, t \in \cO_{d-1} \}.
\end{align*}

Thus \vca{}'s border is a superset of \oavi{}'s border and can lead to the construction of unnecessary generators. Suppose, for example, that during the execution of the algorithms, it holds that $\cO_1 = \{t_1, \ldots, t_n\}\subseteq \cT$ and $\cO_2 = \{t_1^2\}$. 
For \oavi{}, $\partial_3 \cO= \{t_1^3\}$, and for \vca{}, $\partial_3\cO = \{t_1^3, t_1^2t_2\ldots, t_1^2t_n\}$. In this example, \vca{} can construct up to $n-1$ redundant generators. 
Since \vca{} is monomial-agnostic, \vca{}'s border cannot be replaced by the border as defined in Definition~\ref{definition:border}.

To describe the differences between the borders of \oavi{} and other monomial-aware algorithms, we recall the original algebraic structures that motivate the different borders. Let $X \subseteq \R^n$, $\psi = \eps = 0$, and $(\cG, \cO) = \oavi(X, \psi, \eps)$. Then, $\cG$ forms a particular type of generating set for an ideal, a so-called \emph{reduced Gröbner basis}, see \citet{kreuzer2000computational} for the technical definition. We thus refer to the border in Definition~\ref{definition:border} as the \emph{reduced Gröbner basis border} (border-\ref{eq:gb}). As we showed in Lemma~\ref{lemma:order_ideal}, employing the border-\eqref{eq:gb} in \oavi{} guarantees that the output of \oavi{} forms an order ideal. 
Other monomial-aware algorithms such as \abm{}, \avi{}, and \bbabm{} forgo the border-\eqref{eq:gb} in favour of the \emph{border basis border} (border-\ref{eq:bb}):
\begin{align}\label{eq:bb}\tag{\boba{}}
    \partial_d\cO := \{u = v \cdot t \in \cT_d \mid v\in \cT_{1}, t \in \cO_{d-1} \}.
\end{align}
Note that \oavi{}, \abm{}, \avi{}, and \bbabm{} can all be run with the border-\eqref{eq:gb} or the border-\eqref{eq:bb} and \oavi{} and \bbabm{} are equivalent when run with the same border.
The generating sets constructed with \abm{}, \avi{}, and \bbabm{} form \emph{border bases}, which tend to be more robust to perturbations in the data than reduced Gröbner bases \citep{limbeck2013computation}. However, foregoing the border-\eqref{eq:gb} also leads to the construction of more generators.
In Figure~\ref{fig:border}, we compare the number of constructed generators for \pcg{}\avi{} and \abm{} for varying vanishing parameters $\psi > 0$ for different data sets. The plots indicate that when the algorithms are run with the border-\eqref{eq:bb}, they tend to construct more generators than with the border-\eqref{eq:gb}.

\begin{figure}[t]
\centering
\begin{tabular}{c c c c}
\begin{subfigure}{.22\textwidth}
    \centering
        \includegraphics[width=1\textwidth]{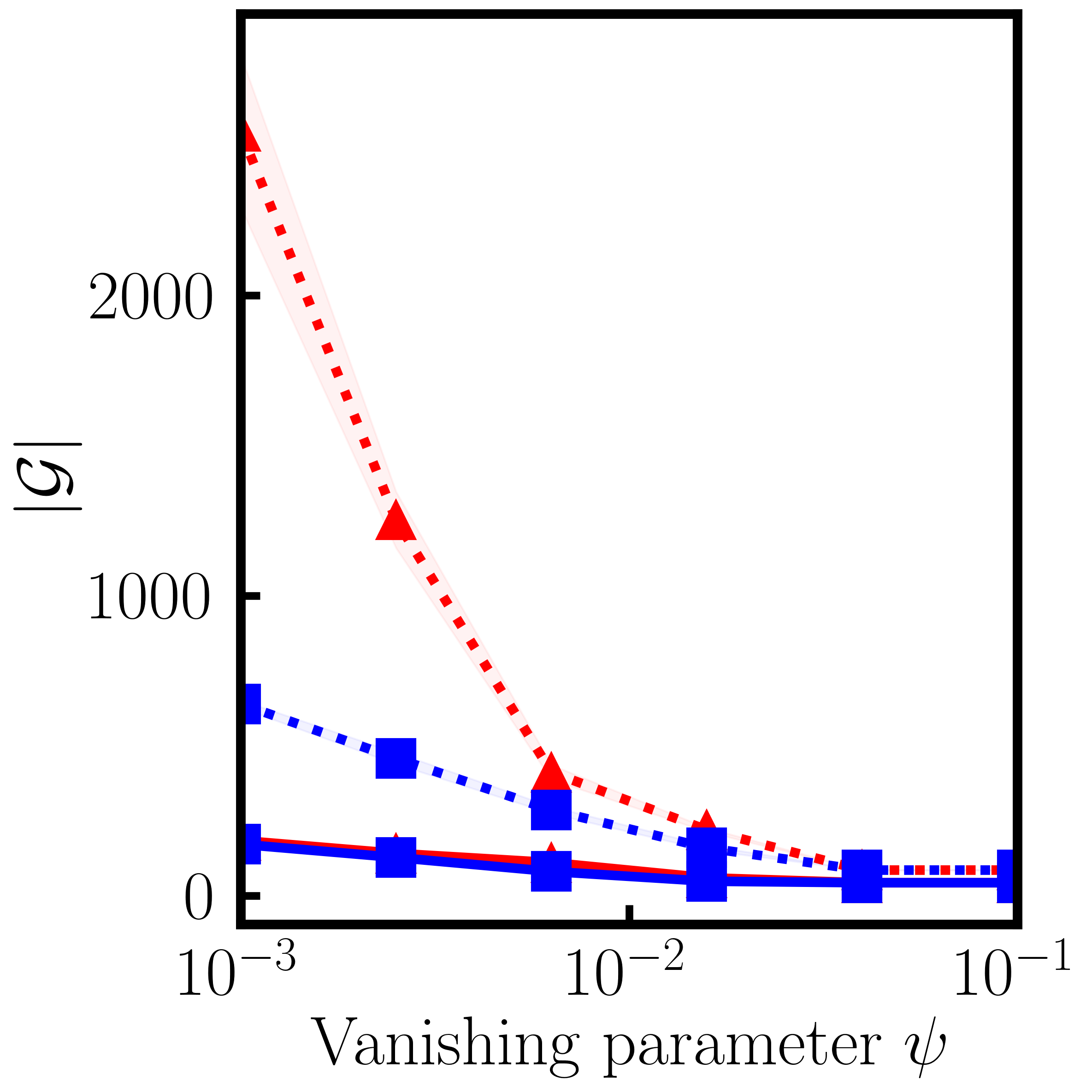}
        \caption{credit}
        \label{fig:border_credit}
    \end{subfigure}
    & 
    \begin{subfigure}{.22\textwidth}
    \centering
        \includegraphics[width=1\textwidth]{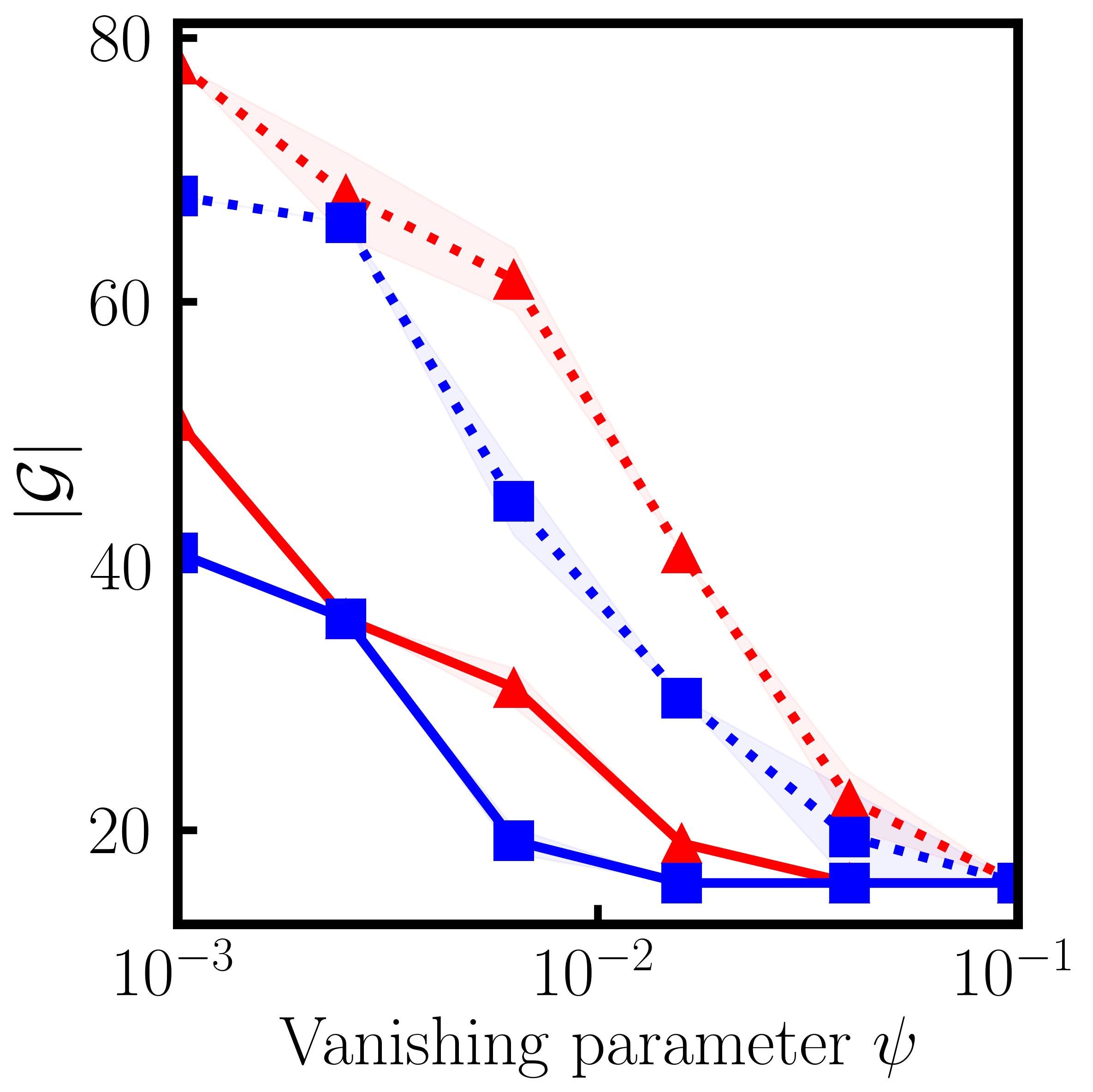}
        \caption{htru}
        \label{fig:border_htru}
    \end{subfigure}
    & 
    \begin{subfigure}{.22\textwidth}
    \centering
        \includegraphics[width=1\textwidth]{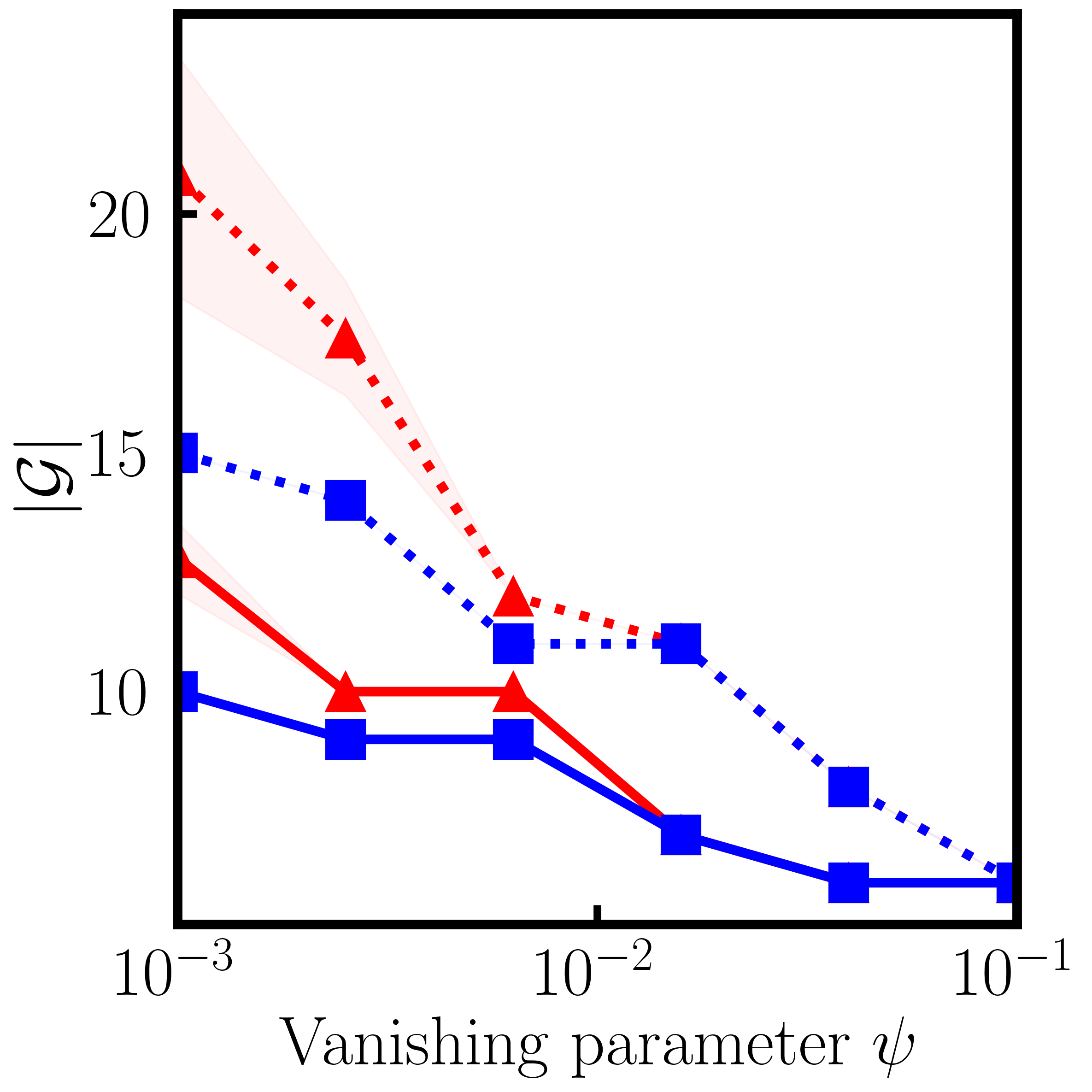}
        \caption{skin}
        \label{fig:border_skin}
    \end{subfigure} 
    & 
    \begin{subfigure}{.22\textwidth}
    \centering
        \includegraphics[width=1\textwidth]{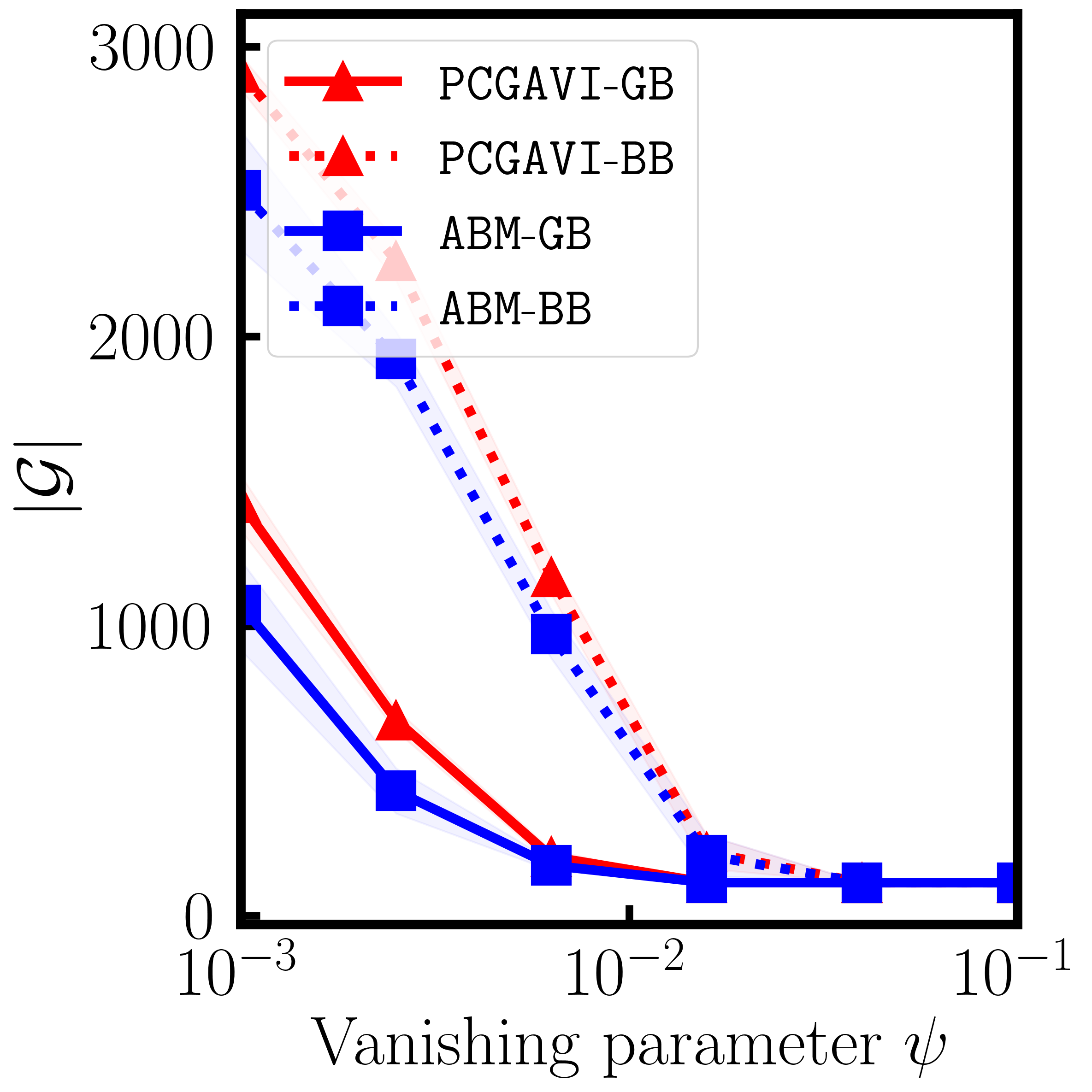}
        \caption{spam}
        \label{fig:border_spam}
    \end{subfigure} 
\end{tabular}
\caption{Comparison of the number of generators constructed with \pcg{}\avi{} and \abm{} for the border-\eqref{eq:gb} and  the border-\eqref{eq:bb}, averaged over ten random runs with shaded standard deviations. Running algorithms with with the border-\eqref{eq:bb} often leads to the construction of more generators than with the border-\eqref{eq:gb}. See Section~\ref{sec:border_types_G} for details on the setup.
}\label{fig:border}
\end{figure}

\section{Numerical Experiments}\label{section:numerical_experiments}

In this section, we compare the performance of \oavi{} as a preprocessing technique for a subsequently applied linear kernel \svm{} to related approaches and determine the influence of the border type on generator-constructing algorithms.

\subsection{General Setup}\label{sec:general_setup}

In this section, we present the information relevant to all our numerical experiments.

\subsubsection{Implementation}
The numerical experiments are implemented in \textsc{Python} and performed on an Nvidia GeForce RTX 3080 GPU with 10GB RAM and an Intel Core i7 11700K 8x CPU at 3.60GHz with 64 GB RAM.
Our code is publicly available on 
\href{https://github.com/ZIB-IOL/cgavi/releases/tag/v2.0.0}{GitHub}.
For all generator-constructing algorithms, we use the definition of $\psi$-approximately vanishing in Definition~\ref{definition:approximately_vanishing_ideal}.

We implement \emph{accelerated gradient descent} (\agd{}) \citep{nesterov1983method} and \pcg{} as \cvxoracle{}s in \oavi{} and refer to the resulting algorithms as \pcg{}\avi{}, and \agd{}\avi{}, respectively. The solvers are run for up to 10,000 iterations. \pcg{} is run up to accuracy $\eps = 0.001 \cdot \psi$. For \pcg{}, we replace \eqref{eq:cop} with \eqref{eq:ccop} with $\tau = 1000$ and the norm $\|\cdot\|_1$. We terminate \pcg{} when less than $0.000001 \cdot \psi$ progress is made in absolute difference between function values, when the coefficient vector of a generator is constructed, or if we have a guarantee that no coefficient vector of a generator can be constructed.
We terminate \agd{} when less than $0.000001 \cdot \psi$ progress is made in absolute difference between function values for 20 iterations in a row or the coefficient vector of a generator is constructed.
Unless noted otherwise, we run \oavi{} with the border-\eqref{eq:gb} as in Definition~\ref{definition:border}.

We implement \abm{} as presented in \citet{limbeck2013computation} with the modification that instead of applying the \svd{} to the matrix corresponding to $A = \cO(X)$ in \oavi{}, we apply the \svd{} to $A^\intercal A$ in case this leads to a faster training time. Unless noted otherwise, we run \abm{} with the border-\eqref{eq:gb} as in Definition~\ref{definition:border}.

We implement \vca{} as presented in \citet{livni2013vanishing} with the modification that instead of applying the \svd{} to the matrix corresponding to $A = \cO(X)$ in \oavi{}, we apply the \svd{} to $A^\intercal A$ in case this leads to a faster training time.

We use a polynomial kernel \svm{} with one-versus-rest approach from the \textsc{scikit-learn} software package \citep{pedregosa2011scikit}. We run the polynomial kernel \svm{} with $\ell_2$-regularization up to tolerance  $10^{-3}$ or for up to 10,000 iterations.

\oavi{}, \abm{}, and \vca{} are used as preprocessing techniques for a subsequently applied linear kernel \svm{} using the machine learning pipeline discussed in Section~\ref{section:the_cgavi_pipeline}. We refer to the combined approaches as  \oavi{}$^*$, \abm{}$^*$, and \vca{}$^*$, respectively. The linear kernel \svm{} is implemented using the \textsc{scikit-learn} software package and run with $\ell_1$-penalized squared hinge loss up to tolerance  $10^{-4}$ or for up to 10,000 iterations.

\subsubsection{Hyperparameters}
\sloppy 
The hyperparameters for \pcg{}\avi{}$^*$, \agd{}\avi{}$^*$, \abm{}$^*$, and \vca{}$^*$ are the vanishing parameter 
$\psi\in \{0.1, 0.05, 0.01, 0.005, 0.001, 0.0005\}$ and the $\ell_1$-regularization coefficient  of the linear kernel \svm{} in $\{0.1, 1, 10\}$. For the polynomial kernel \svm{}, the hyperparameters are the degree of the kernel in $\{1, 2, 3, 4\}$ and the $\ell_2$-regularization coefficient in $\{0.1, 1, 10\}$.

\subsubsection{Data Sets} 

We provide an overview of the data sets in Table~\ref{table:data_sets}. For each data set, we apply min-max feature scaling into the range $[0, 1]$ as a preprocessing step.

\subsection{Experiment: Performance}\label{sec:performance}

\begin{table*}[t]
\centering
\begin{tabular}{|l|l|r|r|}
\hline
Data Set & Full Name &  \# Samples & \# Features  \\
\hline
bank &  banknote authentication & 1,372 & 4 \\
credit &  default of credit cards \citep{yeh2009comparisons} & 30,000 & 22  \\
htru & HTRU2 \citep{lyon2016fifty} &  17,898 & 8  \\
seeds & seeds&  210 & 7 \\
skin & skin \citep{bhatt2010skin} & 245,057  &  3  \\
spam & spambase &  4,601 & 57 \\
\hline
\end{tabular}
\caption{All data sets are binary classification data sets, except for seeds, which is made up of three classes. The data sets are retrieved from the UCI Machine Learning Repository \citep{dua2017uci} and additional references are provided.
}\label{table:data_sets}
\end{table*}
We compare the performance of \pcg{}\avi{}$^*$, \agd{}\avi{}$^*$, \abm{}$^*$, \abm{}$^*$, and polynomial kernel \svm{} on various data sets.

\subsubsection{Setup}\label{sec:setup_performance}
We apply \pcg{}\avi{}$^*$, \agd{}\avi{}$^*$, \abm{}$^*$, \vca{}$^*$, and polynomial kernel \svm{} to data sets bank, credit, htru, seeds, skin, and spam. The hyperparameters are tuned on the training data using threefold cross-validation. We retrain the algorithms on the entire training data using the best hyperparameter combination and compare the classification error on the test set, the hyperparameter optimization time, and the test time, that is, the time to evaluate a method on new data. For generator-constructing approaches, we also compare $|\cG| + |\cO|$, where $|\cG| = \sum_{i} |\cG^i|, |\cO| = \sum_{i} |\cO^i|$, and $(\cG^i, \cO^i)$ is the output of a generator-constructing algorithm applied to samples belonging to class $i$. Moreover, for the generator-constructing approaches, we also compare the sparsity of the feature transformation. The results are averaged over ten random $60 \% / 40 \%$ train/test/partitions.

\subsubsection{Results}
\begin{table*}[t]
\centering
\begin{tabular}{|l|l|r|r|r|r|r|r|}
\hline
\multicolumn{2}{|c|}{\multirow{2}{*}{Algorithms}} & \multicolumn{6}{c|}{Data Sets}  \\ \cline{3-8}
\multicolumn{2}{|c|}{} & {bank} & {credit} & {htru} & {seeds} & {skin} & {spam} \\
\hline
\multirow{5}{*}{\rotatebox[origin=c]{90}{Error Test}}
 & \pcg{}\avi{}$^*$ & $0.51$ & \bm{$17.96$} & $2.11$ & \bm{$3.69$}& $0.26$ & $7.16$ \\
 & \agd{}\avi{}$^*$ & \bm{$0.00$} & $18.14$ & \bm{$2.07$} & $4.76$ & \bm{$0.20$}& \bm{$6.67$}\\
 & \abm{}$^*$ & $0.47$ & $18.36$ & $2.12$ & $5.36$ & $0.43$ & $7.11$ \\
 & \vca{}$^*$ & $0.16$ & $19.85$ & $2.11$ & $5.00$ & $0.24$ & $7.15$ \\
 & \svm{} & \bm{$0.00$} & $18.34$ & $2.08$ & $4.76$ & $2.25$ & $7.13$ \\
 \hline
\multirow{5}{*}{\rotatebox[origin=c]{90}{Time Hyper.}}
& \pcg{}\avi{}$^*$ & $3.6 \times 10^{2}$ & $1.1 \times 10^{3}$ & $3.4 \times 10^{2}$ & $5.9 \times 10^{2}$ & $2.3 \times 10^{2}$ & $2.4 \times 10^{3}$ \\
 & \agd{}\avi{}$^*$ & $2.5 \times 10^{1}$ & $3.0 \times 10^{2}$ & $5.9 \times 10^{1}$ & $4.4 \times 10^{1}$ & $9.9 \times 10^{1}$ & $2.1 \times 10^{2}$ \\
 & \abm{}$^*$ & $3.7 \times 10^{-1}$ & \bm{$1.5 \times 10^{1}$} & $7.7 \times 10^{0}$ & $6.6 \times 10^{-1}$ & $1.8 \times 10^{1}$ & $2.2 \times 10^{1}$ \\
 & \vca{}$^*$ & $7.6 \times 10^{-1}$ & $1.7 \times 10^{1}$ & $4.6 \times 10^{0}$ & $3.8 \times 10^{0}$ & \bm{$1.3 \times 10^{1}$} & $3.4 \times 10^{1}$ \\
 & \svm{} & \bm{$7.0 \times 10^{-2}$} & $8.9 \times 10^{1}$ & \bm{$4.1 \times 10^{0}$} & \bm{$3.0 \times 10^{-2}$} & $7.1 \times 10^{2}$ & \bm{$2.2 \times 10^{0}$} \\
\hline
\multirow{5}{*}{\rotatebox[origin=c]{90}{Time Test}}
& \pcg{}\avi{}$^*$ & $1.5 \times 10^{-3}$ & $3.5 \times 10^{-3}$ & $2.3 \times 10^{-3}$ & $1.2 \times 10^{-3}$ & $8.8 \times 10^{-3}$ & \bm{$1.6 \times 10^{-3}$} \\
 & \agd{}\avi{}$^*$ & $1.4 \times 10^{-3}$ & $6.5 \times 10^{-3}$ & $1.8 \times 10^{-3}$ & $1.5 \times 10^{-3}$ & $8.6 \times 10^{-3}$ & \bm{$1.6 \times 10^{-3}$} \\
 & \abm{}$^*$ & $1.2 \times 10^{-3}$ & \bm{$3. \times 10^{-3}$} & \bm{$1.2 \times 10^{-3}$} & $1.2 \times 10^{-3}$ & $7.2 \times 10^{-3}$ & $1.9 \times 10^{-3}$ \\
 & \vca{}$^*$ & $7.1 \times 10^{-4}$ & $3.1 \times 10^{-3}$ & $1.3 \times 10^{-3}$ & $1.4 \times 10^{-3}$ & \bm{$6.8 \times 10^{-3}$} & $7.6 \times 10^{-3}$ \\
 & \svm{} & \bm{$2.9 \times 10^{-4}$} & $1.4 \times 10^{0}$ & $5.6 \times 10^{-2}$ & \bm{$1.8 \times 10^{-4}$} & $1.1 \times 10^{1}$ & $2. \times 10^{-2}$ \\
\hline
\hline
\multirow{4}{*}{\rotatebox[origin=c]{90}{$|\cG| + |\cO|$}}
 & \pcg{}\avi{}$^*$ & $36.80$ & $67.30$ & $60.00$ & $43.50$ & $32.30$ & \bm{$152.10$} \\
 & \agd{}\avi{}$^*$ & $35.10$ & $213.70$ & $42.10$ & $60.10$ & $27.00$ & $168.40$ \\
 & \abm{}$^*$ & $28.80$ & $51.40$ & \bm{$18.70$} & \bm{$37.90$} & $19.30$ & $259.50$ \\
 & \vca{}$^*$ & \bm{$23.80$} & \bm{$49.80$} & $19.00$ & $103.10$ & \bm{$9.00$} & $1766.40$ \\
 \hline
 \multirow{4}{*}{\rotatebox[origin=c]{90}{\eqref{eq:sparsity}}}
 & \pcg{}\avi{}$^*$ & \bm{$0.17$} & \bm{$0.52$} & \bm{$0.43$} & \bm{$0.29$} & \bm{$0.28$} & \bm{$0.49$} \\
 & \agd{}\avi{}$^*$ & $0.00$ & $0.00$ & $0.00$ & $0.00$ & $0.00$ & $0.01$ \\
 & \abm{}$^*$ & $0.00$ & $0.00$ & $0.00$ & $0.00$ & $0.00$ & $0.00$ \\
 & \vca{}$^*$ & $0.00$ & $0.00$ & $0.00$ & $0.00$ & $0.00$ & $0.00$ \\
 \hline
\end{tabular}
\caption{Comparison of test set classification error in percent, hyperparameter optimization time in seconds, and test time in seconds. For the generator-constructing approaches, we also compare the magnitude of $|\cG| + |\cO|$ and \eqref{eq:sparsity}.
The results are averaged over ten random $60 \%$/$40 \%$ train/test partitions and the best results in each category are in bold.
}\label{table:results}
\end{table*}

We present the results in Table~\ref{table:results}. 
The classification error on the test set for \oavi{}$^*$ is competitive with other approaches. Indeed, for \agdavi{}$^*$, the classification error on the test set is either equal to or smaller than the classification error of non-\oavi{}-based approaches. \pcg{}\avi{}$^*$ tends to perform slightly worse in terms of classification error than \agdavi{}$^*$ but remains competitive with the other approaches.
The hyperparameter optimization time of \oavi{}$^*$ tends to be slower than that of \abm{}$^*$ and \vca{}$^*$. \pcg{}\avi{}$^*$ is always slower than \agd{}\avi{} by one order of magnitude. The hyperparameter tuning time of the polynomial kernel \svm{} tends to be very competitive on smaller data sets. However, on skin, a data set of 245,057 samples, the \svm{} is slower than all other approaches.
The test times of the generator-based approaches tend to be of similar order of magnitudes. For the smaller data sets bank and seeds, the test time of the \svm{} is faster than that of the generator-based approaches. For larger data sets, however, the test time of the \svm{} is often multiple orders of magnitude greater than that of the generator-based approaches.
For \abm{}$^*$ and \vca{}$^*$, $|\cG|+|\cO|$ is often smaller than for \oavi{}$^*$. On spam, the data set with the most features, $|\cG|+|\cO|$ is significantly smaller for \oavi{}$^*$ than for \abm{}$^*$ and \vca{}$^*$.
Only \pcg{}\avi{}$^*$ constructs sparse feature transformations.

\subsection{Experiment: Performance Comparison of Different Border Types }\label{sec:border_types_performance}

\begin{table*}[t]
\centering
\begin{tabular}{|l|l|r|r|r|r|r|r|}
\hline
\multicolumn{2}{|c|}{\multirow{2}{*}{Algorithms}} & \multicolumn{6}{c|}{Data Sets}  \\ \cline{3-8}
\multicolumn{2}{|c|}{} & {bank} & {credit} & {htru} & {seeds} & {skin} & {spam} \\
\hline
\multirow{6}{*}{\rotatebox[origin=c]{90}{Error Test}}
 & \pcg{}\avi{}-\eqref{eq:gb}$^*$ & $0.51$ & \bm{$17.96$} & $2.11$ & \bm{$3.69$} & $0.26$ & $7.16$ \\
 & \pcg{}\avi{}-\eqref{eq:bb}$^*$ & $0.55$ & $18.02$ & $2.14$ & $4.76$ & \bm{$0.20$} & $7.14$ \\
 & \agd{}\avi{}-\eqref{eq:gb}$^*$ & \bm{$0.00$} & $18.14$ & $2.07$ & $4.76$ & \bm{$0.20$} & $6.67$ \\
 & \agd{}\avi{}-\eqref{eq:bb}$^*$ & \bm{$0.00$} & $18.02$ & \bm{$2.04$} & $4.76$ & \bm{$0.20$} & \bm{$6.51$} \\
 & \abm{}-\eqref{eq:gb}$^*$ & $0.47$ & $18.36$ & $2.12$ & $5.36$ & $0.43$ & $7.11$ \\
 & \abm{}-\eqref{eq:bb}$^*$ & $0.26$ & $18.11$ & $2.08$ & $4.52$ & $0.27$ & $7.16$ \\
 \hline
\multirow{6}{*}{\rotatebox[origin=c]{90}{Time Hyper.}}
 & \pcg{}\avi{}-\eqref{eq:gb}$^*$ & $3.6 \times 10^{2}$ & $1.1 \times 10^{3}$ & $3.4 \times 10^{2}$ & $5.9 \times 10^{2}$ & $2.3 \times 10^{2}$ & $2.4 \times 10^{3}$ \\
 & \pcg{}\avi{}-\eqref{eq:bb}$^*$ & $4.4 \times 10^{2}$ & $1.8 \times 10^{3}$ & $3.7 \times 10^{2}$ & $7.5 \times 10^{2}$ & $2.9 \times 10^{2}$ & $2.5 \times 10^{3}$ \\
 & \agd{}\avi{}-\eqref{eq:gb}$^*$ & $2.5 \times 10^{1}$ & $3.0 \times 10^{2}$ & $5.9 \times 10^{1}$ & $4.4 \times 10^{1}$ & $9.9 \times 10^{1}$ & $2.1 \times 10^{2}$ \\
 & \agd{}\avi{}-\eqref{eq:bb}$^*$ & $3. \times 10^{1}$ & $1.0 \times 10^{3}$ & $1.0 \times 10^{2}$ & $7.9 \times 10^{1}$ & $1.4 \times 10^{2}$ & $3.8 \times 10^{2}$ \\
 & \abm{}-\eqref{eq:gb}$^*$ & \bm{$3.7 \times 10^{-1}$} & \bm{$1.5 \times 10^{1}$} & \bm{$7.7 \times 10^{0}$} & \bm{$6.6 \times 10^{-1}$} & \bm{$1.8 \times 10^{1}$} & \bm{$2.2 \times 10^{1}$} \\
 & \abm{}-\eqref{eq:bb}$^*$ & $4.4 \times 10^{-1}$ & $4.1 \times 10^{1}$ & $8.9 \times 10^{0}$ & $1.0 \times 10^{0}$ & $2.3 \times 10^{1}$ & $4.9 \times 10^{1}$ \\
\hline
\multirow{6}{*}{\rotatebox[origin=c]{90}{Time Test}}
 & \pcg{}\avi{}-\eqref{eq:gb}$^*$ & $1.5 \times 10^{-3}$ & $3.5 \times 10^{-3}$ & $2.3 \times 10^{-3}$ & $1.2 \times 10^{-3}$ & $8.8 \times 10^{-3}$ & $1.6 \times 10^{-3}$ \\
 & \pcg{}\avi{}-\eqref{eq:bb}$^*$ & $1.7 \times 10^{-3}$ & $3.5 \times 10^{-2}$ & $2.6 \times 10^{-3}$ & $1.3 \times 10^{-3}$ & $1.1 \times 10^{-2}$ & $3.5 \times 10^{-3}$ \\
 & \agd{}\avi{}-\eqref{eq:gb}$^*$ & $1.4 \times 10^{-3}$ & $6.5 \times 10^{-3}$ & $1.8 \times 10^{-3}$ & $1.5 \times 10^{-3}$ & $8.6 \times 10^{-3}$ & $1.6 \times 10^{-3}$ \\
 & \agd{}\avi{}-\eqref{eq:bb}$^*$ & $1.6 \times 10^{-3}$ & $2.1 \times 10^{-2}$ & $2.7 \times 10^{-3}$ & $1.2 \times 10^{-3}$ & $8.2 \times 10^{-3}$ & $3. \times 10^{-3}$ \\
 & \abm{}-\eqref{eq:gb}$^*$ & \bm{$1.2 \times 10^{-3}$} & \bm{$3. \times 10^{-3}$} & \bm{$1.2 \times 10^{-3}$} & \bm{$1.2 \times 10^{-3}$} & \bm{$7.2 \times 10^{-3}$} & \bm{$1.9 \times 10^{-3}$} \\
 & \abm{}-\eqref{eq:bb}$^*$ & $1.3 \times 10^{-3}$ & $4.7 \times 10^{-3}$ & $1.7 \times 10^{-3}$ & $1.4 \times 10^{-3}$ & $7.9 \times 10^{-3}$ & $3.9 \times 10^{-3}$ \\
\hline
\hline
\multirow{6}{*}{\rotatebox[origin=c]{90}{$|\cG| + |\cO|$}}
 & \pcg{}\avi{}-\eqref{eq:gb}$^*$ & $36.80$ & $67.30$ & $60.00$ & $43.50$ & $32.30$ & \bm{$152.10$} \\
 & \pcg{}\avi{}-\eqref{eq:bb}$^*$ & $55.10$ & $1372.20$ & $87.60$ & $78.30$ & $45.00$ & $664.00$ \\
 & \agd{}\avi{}-\eqref{eq:gb}$^*$ & $35.10$ & $213.70$ & $42.10$ & $60.10$ & $27.00$ & $168.40$ \\
 & \agd{}\avi{}-\eqref{eq:bb}$^*$ & $50.20$ & $869.20$ & $90.30$ & $76.80$ & $29.00$ & $555.60$ \\
 & \abm{}-\eqref{eq:gb}$^*$ & \bm{$28.80$} & \bm{$51.40$} & \bm{$18.70$} & \bm{$37.90$} & \bm{$19.30$} & $259.50$ \\
 & \abm{}-\eqref{eq:bb}$^*$ & $34.80$ & $132.40$ & $40.90$ & $80.70$ & $24.30$ & $714.90$ \\
 \hline
 \multirow{6}{*}{\rotatebox[origin=c]{90}{\eqref{eq:sparsity}}}
 & \pcg{}\avi{}-\eqref{eq:gb}$^*$ & $0.17$ & $0.52$ & $0.43$ & $0.29$ & $0.28$ & $0.49$ \\
 & \pcg{}\avi{}-\eqref{eq:bb}$^*$ & \bm{$0.27$} & \bm{$0.68$} & \bm{$0.54$} & \bm{$0.32$} & \bm{$0.29$} & \bm{$0.60$} \\
 & \agd{}\avi{}-\eqref{eq:gb}$^*$ & $0.00$ & $0.00$ & $0.00$ & $0.00$ & $0.00$ & $0.01$ \\
 & \agd{}\avi{}-\eqref{eq:bb}$^*$ & $0.00$ & $0.00$ & $0.00$ & $0.00$ & $0.00$ & $0.02$ \\
 & \abm{}-\eqref{eq:gb}$^*$ & $0.00$ & $0.00$ & $0.00$ & $0.00$ & $0.00$ & $0.00$ \\
 & \abm{}-\eqref{eq:bb}$^*$ & $0.00$ & $0.00$ & $0.00$ & $0.00$ & $0.00$ & $0.01$ \\
 \hline
\end{tabular}
\caption{Comparison of test set classification error in percent, hyperparameter optimization time in seconds, test time in seconds, magnitude of $|\cG| + |\cO|$, and \eqref{eq:sparsity}.
The results are averaged over ten random $60 \%$/$40 \%$ train/test partitions and the best results in each category are in bold.
}\label{table:results_border}
\end{table*}

We compare the performance of \pcg{}\avi{}-\eqref{eq:gb}$^*$, \pcg{}\avi{}-\eqref{eq:bb}{}$^*$, \agd{}\avi{}-\eqref{eq:gb}$^*$, \agd{}\avi{}-\eqref{eq:bb}{}$^*$, \abm{}-\eqref{eq:gb}$^*$, and \abm{}-\eqref{eq:bb}{}$^*$ on various data sets, where the suffixes \eqref{eq:bb} and \eqref{eq:gb} indicate the use of the border-\eqref{eq:bb} and the border-\eqref{eq:gb}, respectively.

\subsubsection{Setup}

We repeat the experiment presented in Section~\ref{sec:performance} using the setup from Section~\ref{sec:setup_performance} but for algorithms \pcg{}\avi{}-\eqref{eq:gb}$^*$, \pcg{}\avi{}-\eqref{eq:bb}{}$^*$, \agd{}\avi{}-\eqref{eq:gb}$^*$, \agd{}\avi{}-\eqref{eq:bb}{}$^*$, \abm{}-\eqref{eq:gb}$^*$, and \abm{}-\eqref{eq:bb}{}$^*$.

\subsubsection{Results}

The results are presented in Table~\ref{table:results_border}. Note that the results of \pcg{}\avi{}$^*$, \agd{}\avi{}$^*$, and \abm{}$^*$ in Table~\ref{table:results} correspond to the results of \pcg{}\avi{}-\eqref{eq:gb}$^*$, \agd{}\avi{}-\eqref{eq:gb}$^*$, and \abm{}-\eqref{eq:gb}$^*$ in Table~\ref{table:results_border} but are restated for convenience. 
Running the algorithms with the border-\eqref{eq:bb} leads to increased size of the output, $|\cG| + |\cO|$, than with the border-\eqref{eq:gb}. The increased size of the output often leads to longer hyperparameter optimization and test times when algorithms are run with the border-\eqref{eq:bb} than with the border-\eqref{eq:gb}. For \pcg{}\avi{}$^*$, it is not clear whether using the border-\eqref{eq:bb} leads to better classification error on the test set than the border-\eqref{eq:gb}. For \agd{}\avi{}$^*$ and \abm{}$^*$, the border-\eqref{eq:bb} tends to facilitate slightly better classification error on the test set than the border-\eqref{eq:gb}. Finally, for \pcg{}\avi{}$^*$, the \eqref{eq:bb}-variant creates a sparser feature transformation than the \eqref{eq:gb}-variant.

\subsection{Experiment: Output-Size Comparison for Different Border Types}\label{sec:border_types_G}

In this section, we compare the number of constructed generators for \pcg{}\avi{}-\eqref{eq:gb}, \pcg{}\avi{}-\eqref{eq:bb}, \abm{}-\eqref{eq:gb}, and \abm{}-\eqref{eq:bb} for varying vanishing parameters $\psi > 0$ on different data sets.

\subsubsection{Setup}
We apply \pcg{}\avi{}-\eqref{eq:gb}, \pcg{}\avi{}-\eqref{eq:bb}, \abm{}-\eqref{eq:gb}, and \abm{}-\eqref{eq:bb} for different values of $\psi$ to the data sets credit, htru, skin, and spam. We plot the number of constructed generators $|\cG| = \sum_i |\cG^i|$, where $\cG^i$ is the set of generators constructed for class $i$. The results are averaged over ten random runs and standard deviations are shaded.

\subsubsection{Results}
The results are presented in Figure~\ref{fig:border}. For all data sets and for both \pcg{}\avi{} and \abm{}, we observe that running the algorithms with the border-\eqref{eq:gb} leads to the construction of sometimes significantly fewer generators than with the border-\eqref{eq:bb}.

\section{Discussion}\label{section: discussion}

We introduced a new algorithm for the construction of generators of the approximate vanishing ideal, \oavi{}, a framework that captures a theoretically well-motivated variant, \pcg{}\avi{}.
Unlike other vanishing ideal algorithms, \pcg{}\avi{} constructs generators that are $\tau$-bounded in $\ell_1$-norm. As a direct consequence, the algorithm admits three learning guarantees. First, we showed that when a generator $g\in \cG$ attains a small value on a point $\xx\in X \subseteq [-1, 1]^n$ in the training sample, then $g$ also attains small values for samples $\yy\in[-1, 1]^n$, whose distance to $\xx$ is small in $\ell_\infty$-norm. Second, under mild assumptions, we showed that generators constructed by \pcg{}\avi{} are guaranteed to not only vanish approximately on in-sample but also on out-sample data. Third, under mild assumptions, we showed that the combined approach of constructing generators with \pcg{}\avi{} for a subsequently applied linear kernel \svm{} admits a margin bound, similar to that of the \svm{}.
Since other generator-constructing algorithms cannot guarantee that the coefficient vectors of constructed generators are $\tau$-bounded in $\ell_1$-norm, \pcg{}\avi{} is the only vanishing ideal algorithm to which these learning guarantees apply. The question remains open whether similar learning guarantees can be derived for related methods. Especially \abm{}, which guarantees that the coefficient vectors of generators have $\ell_2$-norm equal to $1$ is a promising candidate for alternative learning guarantees. An added benefit of \pcg{}\avi{} is that the coefficient vectors of constructed generators tend to be sparse. The learning guarantees and sparsity of output make \pcg{}\avi{} a theoretically better supported alternative to \abm{}, \avi{}, \bbabm{}, and \vca{}. Numerical experiments indicate that the only drawback of \pcg{}\avi{} is the increased time required for tuning hyperparameters. Given the strong empirical performance and theoretical guarantees of \pcg{}\avi{}, we believe the algorithm to be a compelling alternative to related generator-constructing methods. 
\subsubsection*{Acknowledgements}
This research was partially funded by the Deutsche Forschungsgemeinschaft (DFG, German Research Foundation) under Germany´s Excellence Strategy – The Berlin Mathematics Research Center MATH$^+$ (EXC-2046/1, project ID 390685689, BMS Stipend). We thank Hiroshi Kera for pointing out the connection between \oavi{} and \bbabm{}.

\bibliographystyle{apalike}
\bibliography{bibliography}

\end{document}